\documentclass{article}

\usepackage{relsize}
\usepackage{wrapfig}
\usepackage{balance}
\usepackage{array}
\usepackage{amsmath}
\usepackage{mathrsfs}
\usepackage{amssymb}
\usepackage{amsthm}
\usepackage{dsfont}
\usepackage{grffile}
\usepackage{lmodern}
\usepackage[disable]{todonotes}

\newtheorem{lemma}{Lemma}
\newtheorem{theorem}{Theorem}
\newtheorem{definition}{Definition}

\newtheorem{remark}{Remark}

\renewcommand{\P}{{\mathbb P}}
\newcommand{\R}{{\mathbb R}}

\newcommand{\transpose}{^\mathsf{\scriptscriptstyle T}}

\newcommand{\cD}{\mathcal{D}}

\newcommand{\cF}{\mathcal{F}}
\newcommand{\cG}{\mathcal{G}}

\newcommand{\cL}{\mathcal{L}}

\newcommand{\cO}{\mathcal{O}}

\newcommand{\cV}{\mathcal{V}}
\newcommand{\cW}{\mathcal{W}}

\newcommand{\be}{{\bf e}}

\newcommand{\br}{{\bf r}}

\newcommand{\bq}{{\bf q}}
\newcommand{\bu}{{\bf u}}
\newcommand{\by}{{\bf y}}
\newcommand{\bx}{{\bf x}}
\newcommand{\bA}{{\bf A}}
\newcommand{\bD}{{\bf D}}
\newcommand{\bM}{{\bf M}}
\newcommand{\bI}{{\bf I}}
\newcommand{\bQ}{{\bf Q}}
\newcommand{\bX}{{\bf X}}
\newcommand{\bU}{{\bf U}}
\newcommand{\bV}{{\bf V}}

\newcommand{\balpha}{{\boldsymbol \alpha}}

\newcommand{\bLambda}{{\boldsymbol \Lambda}}
\newcommand{\bmu}{{\boldsymbol \mu}}
\newcommand{\bxi}{{\boldsymbol \xi}}

\DeclareMathOperator*{\argmax}{arg\,max}
\DeclareMathOperator*{\argmin}{arg\,min}

\newcommand{\beq}{\begin{equation}}
\newcommand{\eeq}{\end{equation}}

\newcommand{\beqa}{\begin{eqnarray}}
\newcommand{\eeqa}{\end{eqnarray}}

\newcommand{\beqan}{\begin{eqnarray*}}
\newcommand{\eeqan}{\end{eqnarray*}}

\newcommand{\beqal}{\begin{align*}}
\newcommand{\eeqal}{\end{align*}}

\newcommand{\SpectralUCB}{{\scshape SpectralUCB}}
\newcommand{\LinearEliminator}{{\scshape LinearEliminator}}
\newcommand{\SpectralEliminator}{{\scshape SpectralEliminator}}

\usepackage{times}
\usepackage{graphicx} 
\usepackage{subfigure}
\usepackage{natbib}

\usepackage{algorithm}
\usepackage{algorithmic}

\usepackage{hyperref}

\usepackage[accepted]{icml2014} 

\icmltitlerunning{Spectral Bandits}

\begin{document} 

\twocolumn[
\icmltitle{Spectral Bandits for Smooth Graph Functions}

 \vspace*{-1em}
\icmlauthor{Michal Valko}{michal.valko@inria.fr}
 \vspace*{-0.1em}
  \icmladdress{INRIA Lille - Nord Europe, SequeL team, 40 avenue Halley 59650,
 Villeneuve d'Ascq, France}
  \vspace*{-0.4em}

\icmlauthor{R\' emi Munos}{remi.munos@inria.fr}
 \vspace*{-0.1em}
\icmladdress{INRIA Lille - Nord Europe, SequeL team, France; Microsoft Research New England, Cambridge, MA, USA}

 \vspace*{-0.4em}

\icmlauthor{Branislav Kveton}{branislav.kveton@technicolor.com}
 \vspace*{-0.1em}
\icmladdress{Technicolor Research Center, 735 Emerson St, Palo Alto, CA
94301, USA}
 \vspace*{-0.4em}

\icmlauthor{Tom\'a\v s Koc\'ak}{tomas.kocak@inria.fr}
 \vspace*{-0.1em}
\icmladdress{INRIA Lille - Nord Europe, SequeL team, 40 avenue Halley 59650,
Villeneuve d'Ascq, France}
 \vspace*{-0.4em}


\icmlkeywords{spectral, bandits, graphs, recommender systems, cumulative regret}

\vskip 0.3in
]

\vspace{-1.5em}

\begin{abstract}
Smooth functions on graphs have wide applications in manifold and
semi-supervised learning. In this paper, we study a bandit problem where the
payoffs of arms are smooth on a graph. This framework is suitable for solving
online learning problems that involve graphs, such as content-based
recommendation. In this problem, each item we can recommend is a node and its
expected rating is similar to its neighbors. The goal is to recommend items that
have high expected ratings. We aim for the algorithms where the cumulative
regret with respect to the optimal policy would not scale poorly with the number
of nodes. In particular, we
introduce the notion of an \emph{effective dimension}, which is small in
real-world graphs, and propose two algorithms for solving our problem that scale
linearly and sublinearly in this dimension. Our experiments on real-world
content recommendation problem show that a good estimator of user preferences
for thousands of items can be learned from just  tens of node evaluations.
\end{abstract}

\section{Introduction}
\label{sec:intro}

\emph{A smooth graph function} is a function on a graph that returns similar
values on neighboring nodes. This concept arises frequently in manifold and
semi-supervised learning \cite{zhu2008semi-supervised}, and reflects the fact
that the outcomes on the neighboring nodes tend to be similar. It is well-known
\cite{belkin2006manifold,belkin2004regularization} that a smooth graph function
can be expressed as a linear combination of the eigenvectors of the graph
Laplacian with smallest eigenvalues. Therefore, the problem of learning such 
function can be cast as a regression problem on these eigenvectors. This is
the first work that brings this concept to bandits. In particular, we study a
bandit problem where the arms are the nodes of a graph and the expected payoff
of pulling an arm is a smooth function on this graph.

\begin{figure}[t]
 \vskip -0.5em
  \begin{center}
   \includegraphics[viewport = 112 348 497 558,clip,width=0.9\columnwidth]
 {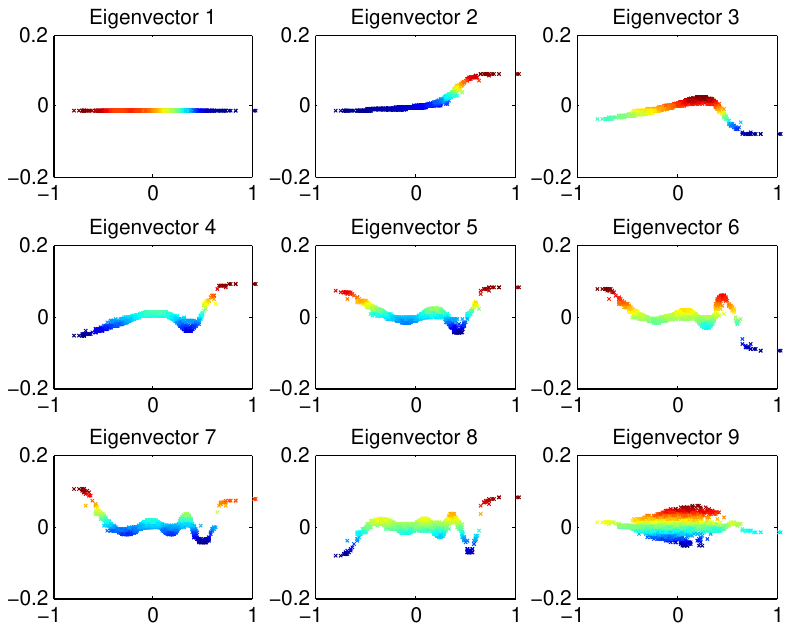}
 \vskip -1em
 \caption{Eigenvectors from the Flixster data corresponding to the smallest
few eigenvalues of the graph Laplacian projected onto the first principal
component of data. Colors indicate the values.}
 \vskip -1.7em
  \label{fig:flixster_eigenvectors}
  \end{center}
\end{figure}

We are motivated by a range of practical problems that involve graphs. One
application is \textit{targeted advertisement} in social networks. Here, the
graph is a social network and our goal is to discover a part
of the network that is interested in a given product. Interests of people in a
social network tend to change smoothly \cite{mcpherson2001birds}, because
friends tend to have similar preferences. Therefore, we take advantage of
this structure and formulate this problem as learning a smooth preference
function on a graph.

Another application of our work are \textit{recommender systems}
\cite{jannach2010recommender}. In content-based recommendation
\cite{chau2011apolo}, the user is recommended items that are
similar to the
items that the user rated highly in the past. The assumption is that users
prefer similar items similarly. The similarity of the items can be measured for
instance by a nearest neighbor graph \cite{billsus2000learning}, where
each item is a node and its neighbors are the most similar items.

In this paper, we consider the following learning setting. The graph is known in
advance and its edges represent the similarity of the nodes. At
time $t$, we choose a node and then observe its payoff. In targeted
advertisement, this may correspond to showing an ad and then observing whether
the person clicked on the ad. In content-based recommendation, this may
correspond to recommending an item and then observing the assigned rating. Based
on the payoff, we update our model of the world and then the game proceeds into
time $t + 1$. Since the number of nodes $N$ can be huge, we are interested in
the regime when $t < N$.

If the smooth graph function can be expressed as a linear combination of $k$
eigenvectors of the graph Laplacian, and $k$ is small and known, our learning
problem can be solved using ordinary linear bandits
\cite{auer2002using,li2010contextual}. In practice, $k$ is problem specific and
unknown. Moreover, the number of features $k$ may approach the number of nodes
$N$. Therefore, proper regularization is necessary, so that the regret of the
learning algorithm does not scale with $N$. We are interested in the setting
where the regret is independent of $N$ and therefore this problem is
non-trivial.


\vspace{-0.4em}
\section{Setting}

Let $\cG$ be the given graph with the set of nodes $\cV$ and
denote $|\cV| = N$ the number of nodes.
Let $\cW$ be the $N \times N$ matrix of similarities $w_{ij}$ (edge
weights) and $\cD$ is the $N \times N$ diagonal matrix with the entries $d_{ii}
= \sum_j w_{ij}$.
The graph Laplacian of $\cG$ is defined as $\cL = \cD - \cW$.
Let $\{\lambda^\cL_k, \bq_k\}_{k=1}^N$ be the eigenvalues and eigenvectors
of $\cL$ ordered such that $ 0 = \lambda^\cL_1 \le \lambda^\cL_2 \le \dots
\le \lambda^\cL_N$.
Equivalently, let $\cL = \bQ \bLambda_\cL \bQ \transpose$ be the
eigendecomposition
of $\cL$, where $\bQ$ is an $N \times N$ \textit{orthogonal} matrix with
eigenvectors in columns.

In our setting we assume that the reward function  is a linear
combination of the eigenvectors. For any set of weights $\balpha$ let
$f_\balpha: \cV \to \R$ be the function defined on nodes,
linear in the basis of the eigenvectors of $\cL$:
\[
 f_\balpha(v) = \sum_{k=1}^{N} \alpha_k (\bq_{k})_v = \langle \bx_v , \balpha
\rangle,
\]
where $\bx_v$ is the $v$-th row of $\bQ$, i.e., $(\bx_{v})_i = (\bq_{i})_v$.
If the weight coefficients of the true $\balpha^*$
are such that the large coefficients correspond
to the eigenvectors with the small eigenvalues
and vice versa, then $f_{\balpha^*}$ would be a smooth function on $\cG$
\cite{belkin2006manifold}.
Figure~\ref{fig:flixster_eigenvectors}
displays first few eigenvectors of the Laplacian
constructed from data we use in our experiments.
In the extreme case, the true $\balpha^*$ may be of the form
$[\alpha_1^*,\alpha_2^*, \dots, \alpha_k^*, 0, 0, 0]\transpose_N$
for some $k\ll N$. Had we known $k$ in such case, the
known linear bandits algorithm would work with the performance
scaling with $k$ instead of $D=N$. Unfortunately, first, we do not know $k$
and second, we do not want to assume such an extreme case (i.e.,~$\alpha_i^* =
0$ for $i>k$). Therefore, we opt for the more plausible assumption
that the coefficients with the high indexes are small. Consequently, we deliver
algorithms with performance that scales with the
smoothness with respect to the graph.


%

The learning setting is the following. In each time step $t\le T$,
the recommender $\pi$ chooses a node $\pi(t)$ and obtains
a noisy reward such that:
\[
  r_t =\langle \bx_{\pi(t)} , \balpha^* \rangle + \varepsilon_t,
\]
where the noise $\varepsilon_t$ is assumed to be $R$-sub-Gaussian for any $t$.
In our setting, we have $\bx_v \in \R^D$ and $\|\bx_v\|_2\leq 1$
for all $\bx_v$.  The goal of the recommender is to minimize the cumulative
regret with respect to the strategy that always picks the best node
w.r.t.~$\balpha^*$.
Let $\pi(t)$ be the node picked (referred to as 
\textit{pulling an arm}) by an algorithm $\pi$ at time $t$.
The cumulative (pseudo) regret of $\pi$ is defined as:
\[
R_T  = T \max_v  f_{\balpha^*}(v) -  \sum_{t=1}^T f_{\balpha^*}(\pi(t))
\]
We call this bandit setting \textit{spectral}
since it is built on the spectral properties of a graph.
Compared to the linear and multi-arm bandits,
the number of arms
 $K$ is equal to the number of nodes $N$ and to the
 dimension of the basis $D$  (eigenvectors are of dimension $N$).
However, a regret that scales with $N$ or $D$
that can be obtained using those settings is not acceptable because the number
of nodes can be large.
While we are mostly interested in the setting with $K=N$,
our algorithms and analyses can be applied for any finite~$K$.

\vspace{-0.4em}
\section{Related Work}
\label{sec:related}

Most related setting to our work is that of the linear
and contextual linear bandits. 
\citet{auer2002using} proposed a
SupLinRel algorithm and showed that it obtains $\sqrt{DT}$
regret that matches the lower bound by \citet{dani2008stochastic}.
First practical and empirically successful
algorithm was LinUCB \cite{li2010contextual}.
Later, \citet{chu2011contextual} analyzed 
SupLinUCB, which is a LinUCB equivalent of SupLinRel, to show that
it also obtains $\sqrt{DT}$ regret.
\citet{abbasi2011improved}
proposed the OFUL algorithm
for linear bandits
which obtains $D\sqrt{T}$ regret.
Using their analysis, it is possible
to show that LinUCB obtains  $D\sqrt{T}$
regret as well (Remark~\ref{rem:linucb}). Whether LinUCB matches the
$\sqrt{DT}$ lower bound
for this setting is still an open problem.

\citet{abernethy2008competing} and \citet{bubeck2012towards} studied a more
difficult \textit{adversarial} setting of linear bandits where the reward
function is time-dependent. It is an open problem if this
approaches would work in our setting and have
an upper bound on the regret that scales better than with~$D$.

\citet{kleinberg2008multi,slivkins2009contextual}, and \citet{bubeck2011x} use
similarity
information
between the context of arms, assuming a Lipschitz or more general
properties.
While such settings are indeed more general, the regret bounds
scale worse with the relevant dimensions.
\citet{srinivas2009gaussian} and \citet{valko2013finite} also perform maximization
over the smooth functions that are either sampled from a Gaussian process
prior or have a small RKHS norm.  Their setting is also more general
than ours since it already generalizes linear bandits. However 
their regret bound in the linear case also scales with $D$.
Moreover, the regret of these algorithms also
depends on a quantity for which data-independent
bounds exists only for some kernels, while
our effective dimension is always computable
given the graph.

Another bandit graph setting called the \textit{gang of bandits}
was studied by \citet{cesa-bianchi2013gang}
where each node is a linear bandit with its own
weight vector which are assumed to be smooth on the graph.
Next, \citet{caron2012leveraging} assume that they obtain the
reward not only from the selected node
but also from all its neighbors.
Yet another kind of the partial observability model for bandits on graphs
 in the adversarial setting is considered by \citet{alon2013from}.

\vspace{-0.4em}
\section{Spectral Bandits}
\label{sec:spectralbandits}

In our setting, the arms are \textit{orthogonal} to each other.
Thinking that the reward observed
 for an arm does not provide any information for other arms
would not be correct because of the assumption that under another
basis, the unknown parameter has a low norm.
This provides additional information across the arms through the estimation
of the parameter. We could think of our setting as an $N$--armed bandit
problem where $N$ is larger than the time horizon $T$  and the
mean reward $\mu_k$ for each arm $k$ satisfies the property that under a
change of coordinates, the vector has small weights, i.e.,~there exists a known
orthogonal matrix $\bU$ such that $\balpha = \bU \bmu$ has a low norm.
As a consequence, we can estimate $\balpha$ using penalization
and then recover $\bmu$.

 Given a vector of weights
$\balpha$,
we define its $\bLambda$ norm as:
\begin{align}
\label{eq:spectalpenalty}
  \|\balpha\|_{\bLambda} = \sqrt{\sum_{k=1}^{N}  \lambda_k \alpha_k^2} =
\sqrt{\balpha\transpose \bLambda \balpha}
\end{align}

We make use of this norm later in our algorithms.
Intuitively, we would like to penalize the coefficients $\balpha$
that correspond to the eigenvectors with the large eigenvalues, in other
words, to the less smooth basis functions on $\cG$.

\vspace{-0.4em}
\subsection{Effective dimension}
\label{ssec:introeffd}

In order to present our algorithms and analyses,
we introduce a notion of \textit{effective dimension} $d$. We keep using capital
$D$ to denote the ambient dimension, which is equal to $N$ in the spectral
 setting. Intuitively, the effective dimension is a proxy
for the number of relevant dimensions. We first provide a
formal definition and then discuss its properties.

In general, we assume there exists a diagonal matrix $\bLambda$ with the
entries $0<\lambda=\lambda_1\leq
\lambda_2\leq \dots\leq \lambda_N$ and
a set of $K$ vectors $\bx_1,\dots, \bx_K\in\R^N$ such that $\|\bx_i\|_2\leq 1$
for all $i$. For the spectral bandits, we have $K = N$.
Moreover, since $\bQ$ is an orthonormal matrix, $\|\bx_i\|_2 = 1$.
Finally, since the first eigenvalue of a graph Laplacian is always zero,
 $\lambda^\cL_1 = 0$, we use $\bLambda = \bLambda_{\cL} + \lambda \bI$,
in order to have $\lambda_1 = \lambda$.

\begin{definition}\label{def:effectived}
 Let the \textbf{effective dimension} $d$ be the largest $d$ such
that:
$$ (d-1) \lambda_d  \leq \frac{T}{\log(1 + T /\lambda)}$$
\end{definition}

The effective dimension $d$ is small when the coefficients $\lambda_i$ grow
rapidly above $T$. This is the case when the dimension of the space $D$ (and
$K$) is much larger than $T$, such as in graphs from social networks with very
large number of nodes $N$. In contrast, when the coefficients are all small
then $d$ may be of the order of $T$, which would make the regret bounds useless.
Figure~\ref{fig:effd} shows how $d$ behaves compared to $D$ on the
generated and the real Flixster network graphs\footnote{We set $\bLambda$ to
$\bLambda_{\cL} + \lambda \bI$ with $\lambda = 0.01$,
where $\bLambda_{\cL}$ is the graph Laplacian of the respective graph.} that we
use for the experiments
in Section~\ref{sec:exp}.

\begin{figure}[ht]
 \begin{center}
\includegraphics[width=0.49\columnwidth]{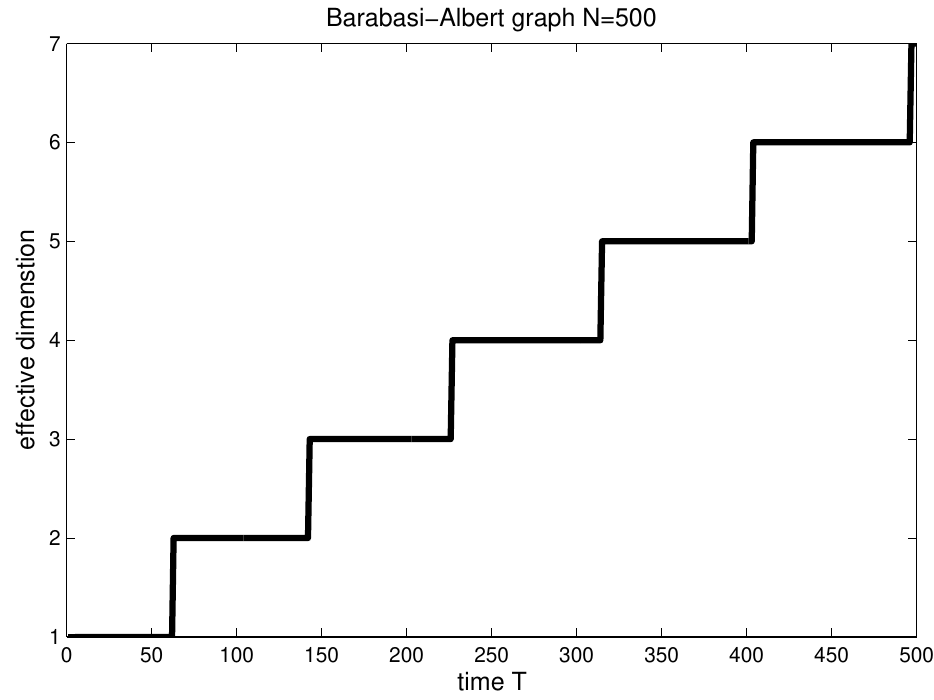}
\includegraphics[width=0.50\columnwidth]{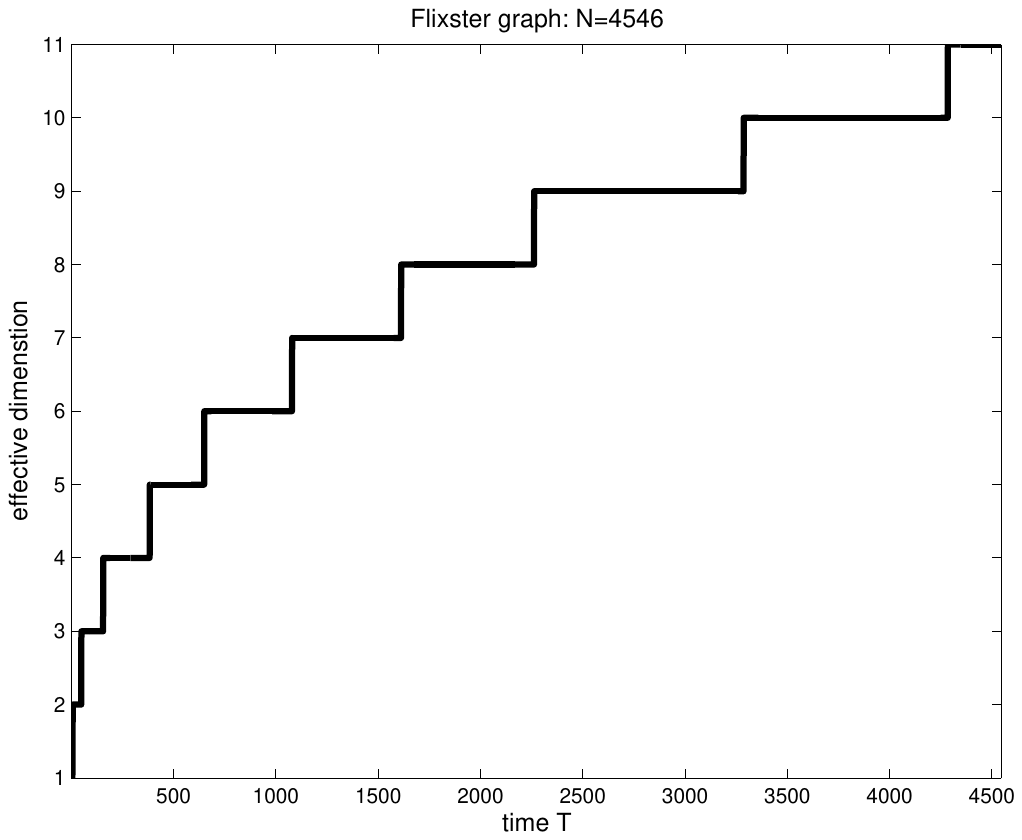}
 \caption{Effective dimension $d$ in the regime $T < N$. The effective dimension
 for this data is much smaller than the ambient dimension $D = N$, which is 500
and 4546 respectively.}
 \label{fig:effd}
 \end{center}
 \end{figure}

The actual form of Definition~\ref{def:effectived} comes from
Lemma~\ref{lem:logdetratio} and will become apparent in 
Section~\ref{sec:analysis}.
The dependence of the effective dimension on $T$ comes from the fact,
that $d$ is related to the number of ``non-negligible'' dimensions
characterizing the space where the solution to the penalized least-squares may
lie, since this solution is basically constrained to an ellipsoid defined
by the inverse of the eigenvalues multiplied by $T$.
Consequently, $d$ is related to the metric dimension of this ellipsoid.
Therefore, when $T$ goes to infinity, the
constraint is relaxed and all directions matter, thus the solution can be
anywhere in a (bounded) space of dimension $N$. On the contrary, for a smaller
$T$, the
ellipsoid possesses a smaller number of ``non-negligible'' dimensions.
Notice that it is natural that this effective dimension depends on $T$ as
we consider the setting $T<N$.
If we
wanted to avoid $T$ in the definition of $d$, we could define it as well in
terms
of $N$ by replacing $T$ by $N$ in Definition~\ref{def:effectived}, but this
would
only loosen its value.

\subsection{\SpectralUCB}
\label{ssec:algospectralucb}

\begin{algorithm}[t]
  \caption{\SpectralUCB}
  \label{alg:TUCB}
\begin{algorithmic}
  \STATE {\bfseries Input:}
  \STATE \quad  $N:$ the number of nodes, $T:$ the number of pulls
  \STATE \quad  $\{\bLambda_{\cL}, \bQ\}$ spectral basis of $\cL$
  \STATE \quad  $\lambda, \delta:$ regularization and confidence parameters
  \STATE \quad  $R,C:$ upper bounds on the noise and $\|\balpha^*\|_\bLambda$
  \STATE {\bfseries Run:}
  \STATE  $\bLambda \gets \bLambda_{\cL} + \lambda \bI$
  \STATE  $d \gets \max\{d : (d - 1) \lambda_d \leq  T/\log(1 + T /\lambda) \} $
  \STATE  $c \gets 2 R\sqrt{ d \log(1 + T/\lambda) + 2\log(1/\delta)} + C $
\FOR{$t = 1$ {\bfseries to} $T$}
  \STATE Update the basis coefficients $\hat\balpha$:
  \STATE \quad $\bX_t \gets [\bx_1,\dots,\bx_{t-1}]\transpose$
  \STATE \quad $\br_t\ \, \gets [r_1,\dots,r_{t-1}]\transpose$
  \STATE \quad $\bV_t \gets \bX_t\transpose\bX_t + \bLambda$
  \STATE \quad $\hat\balpha_t \,\gets \bV_t^{-1}\bX_t\transpose \br_t$
  \STATE Choose the node $v_{t}$ ($\bx_{v_{t}}$-th row of $\bQ$):
  \STATE \quad $v_{t} \gets \argmax_v \left( f_{\hat\balpha_t}(v) + c \|\bx_v \|_{\bV_t^{-1}} \right)$
  \STATE Observe the reward $r_t$
\ENDFOR
\end{algorithmic}
\end{algorithm}
The first algorithm we present is \SpectralUCB\ (Algorithm~\ref{alg:TUCB})
which is based on LinUCB and uses the \textit{spectral
penalty}~\eqref{eq:spectalpenalty}.
For clarity, we set $\bx_{t} = \bx_{v_{t}} = \bx_{\pi(t)}$.
Here we consider regularized
least-squares estimate $\hat\balpha_t$ of the form:
 \vspace{-0.3em}
$$
\hat\balpha_t = \argmin_{\balpha} \left( \sum_{v=1}^{t}\left[\langle \bx_v,
\balpha \rangle - r_v\right]^2 + \|\balpha\|_{\bLambda}^2  \right)
$$
A key part of the algorithm is to define the $c_t\|\bx\|_{\bV_t^{-1}}$
confidence widths for the prediction of the rewards.
We take advantage of our analysis (Section~\ref{sec:effd}) to define $c_t$ based on the effective dimension $d$ which is specifically tailored to our setting.
By doing this we also avoid the computation of the determinant (see Section~\ref{sec:analysis}).
The following theorem characterizes the performance
of \SpectralUCB\/ and bounds the regret as a function of
effective dimension~$d$.

\begin{theorem}
\label{thm:tucb}
Let $d$ be the effective dimension and $\lambda$ be the minimum
eigenvalue of $\bLambda$. If $\| \balpha^* \|_{\bLambda} \leq C$
and for all $\bx_v$, $\langle \bx_v,\balpha^* \rangle \in [-1, 1]$,  then the
cumulative regret of \SpectralUCB\/  is with probability at
least $1-\delta$ bounded as:
\begin{align*}
R_T \leq  &\left[4 R\sqrt{ d \log(1 + T/\lambda) + 2\log(1/\delta)} + 2C +2\right] \\
 & \times \sqrt{4d T \log (1 + T/\lambda)}  
\end{align*}
\end{theorem}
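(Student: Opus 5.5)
The plan follows the OFUL/LinUCB analysis of \citet{abbasi2011improved}, with the Tikhonov matrix $\lambda\bI$ replaced by $\bLambda$, and with every $\log\det$ term controlled through the effective dimension rather than through $D=N$. There are three ingredients: a self-normalized confidence ellipsoid for $\hat\balpha_t$, the standard optimism argument bounding the per-step regret by the confidence width, and an elliptical-potential bound on $\sum_t \|\bx_t\|_{\bV_t^{-1}}^2$.

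\textbf{Step 1 (confidence set).} Write $\hat\balpha_t-\balpha^* = \bV_t^{-1}\bX_t\transpose\bxi_t - \bV_t^{-1}\bLambda\balpha^*$, where $\bxi_t$ is the noise vector. For any $\bx$, Cauchy--Schwarz gives
\[
|\langle \bx,\hat\balpha_t-\balpha^*\rangle| \le \|\bx\|_{\bV_t^{-1}}\left(\|\bX_t\transpose\bxi_t\|_{\bV_t^{-1}} + \|\bLambda\balpha^*\|_{\bV_t^{-1}}\right).
\]
Since $\bV_t\succeq\bLambda$, the second term is at most $\|\balpha^*\|_{\bLambda}\le C$. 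For the first term I would use the self-normalized martingale inequality of \citet{abbasi2011improved} with regularizer $\bLambda$. With probability $1-\delta$, simultaneously for all $t$,
\[
\|\bX_t\transpose\bxi_t\|_{\bV_t^{-1}}^2 \le 2R^2\log\left(\frac{|\bV_t|^{1/2}}{\delta|\bLambda|^{1/2}}\right).
\]
Lemma~\ref{lem:logdetratio} then gives $\log(|\bV_t|/|\bLambda|)\le 2d\log(1+T/\lambda)$. This is exactly where the definition of $d$ enters: the log-det ratio is maximized subject to trace constraints, at most $T$ units of mass are spread over the eigendirections, and directions with $\lambda_i$ much larger than $T/d$ contribute little. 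Combining, $|\langle\bx,\hat\balpha_t-\balpha^*\rangle|\le c\|\bx\|_{\bV_t^{-1}}$ with $c = 2R\sqrt{d\log(1+T/\lambda)+2\log(1/\delta)}+C$. The factor 2 absorbs constants.

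\textbf{Step 2 (optimism).} Let $v^*$ be the optimal node. On the good event, the choice rule gives
\[
f_{\balpha^*}(v^*) \le f_{\hat\balpha_t}(v^*) + c\|\bx_{v^*}\|_{\bV_t^{-1}} \le f_{\hat\balpha_t}(v_t)+c\|\bx_t\|_{\bV_t^{-1}}.
\]
Therefore the instantaneous regret satisfies $r_t^{inst}\le 2c\|\bx_t\|_{\bV_t^{-1}}$. Because the rewards lie in $[-1,1]$, we also have $r_t^{inst}\le 2$, so $r_t^{inst}\le (2c+2)\min(1,\|\bx_t\|_{\bV_t^{-1}})$. This accounts for the bracket $4R\sqrt{\cdot}+2C+2$ in the statement.

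\textbf{Step 3 (summing).} By Cauchy--Schwarz,
\[
R_T\le(2c+2)\sqrt{T\sum_t\min(1,\|\bx_t\|^2_{\bV_t^{-1}})}.
\]
The elliptical potential lemma, using $\min(1,u)\le 2\log(1+u)$ and $|\bV_{t+1}|=|\bV_t|(1+\|\bx_t\|^2_{\bV_t^{-1}})$, gives
\[
\sum_t\min(1,\|\bx_t\|^2_{\bV_t^{-1}}) \le 2\log\frac{|\bV_{T+1}|}{|\bLambda|}.
\]
Applying Lemma~\ref{lem:logdetratio} a second time bounds this by $4d\log(1+T/\lambda)$, which yields the stated bound.

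The main obstacle is the log-det bound in terms of $d$, i.e. Lemma~\ref{lem:logdetratio}. Since it is stated earlier in the paper I can invoke it. If I had to prove it, I would maximize $\sum_i\log(1+t_i/\lambda_i)$ subject to $\sum_i t_i\le T$, $t_i\ge0$ (Hadamard's inequality applied in the eigenbasis of $\bLambda$). I would then split the indices at $d$: the first $d$ contribute at most $d\log(1+T/\lambda)$, and the tail is controlled by $\lambda_{d+1}>T/(d\log(1+T/\lambda))$ together with $\log(1+x)\le x$.
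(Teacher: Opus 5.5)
Your proposal is correct and follows the paper's proof essentially step for step: the $\bLambda$-regularized self-normalized confidence bound (Lemma~\ref{lem:confinterval}) with the log-determinant controlled by Lemma~\ref{lem:logdetratio} to justify the choice of $c$, then optimism giving $\bar r_t\le 2c\|\bx_t\|_{\bV_t^{-1}}$, the truncation $\min(2,2c u)\le(2+2c)\min(1,u)$, Cauchy--Schwarz, Lemma~\ref{lemma:sumx2}, and Lemma~\ref{lem:logdetratio} once more. Your side remark that Lemma~\ref{lem:logdetratio} could be obtained from Hadamard's inequality is a valid and simpler alternative to the paper's axis-alignment argument in Lemma~\ref{lemma:2}: it bounds $|\bV_{T+1}|$ by the product of its diagonal entries $\lambda_i+t_i$ with $\sum_i t_i\le T$, though the theorem itself does not need it.
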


\begin{remark}
The constant $C$ needs to be such that $\| \balpha^* \|_{\bLambda} \leq C$.
If we set $C$ too small, the true $\balpha^*$ will lie outside of the region and far from $\hat\balpha_t$, causing the algorithm to underperform.
Alternatively, $C$ can be time dependent, e.g.,~$C_t = \log T$. In such case, we
do not need to know an upper bound on $\| \balpha^* \|_{\bLambda}$
in advance, but our regret bound would only hold after some
$t$,  when $C_t \geq \| \balpha^* \|_{\bLambda}$.
\end{remark}

We provide the proof of Theorem~\ref{thm:tucb} in Section~\ref{sec:analysis}
and examine the performance of \SpectralUCB\/ experimentally
in Section~\ref{sec:exp}. The $d\sqrt{T}$ result of Theorem~\ref{thm:tucb}
is to be compared with the classical linear bandits,
where LinUCB is the algorithm often used in practice \cite{li2010contextual}
achieving $D\sqrt{T}$ cumulative regret.
As mentioned above and demonstrated in Figure~\ref{fig:effd}, in the $T < N$ regime we can expect $d \ll D = N$
and obtain an improved performance.
\vspace{-0.4em}
\subsection{\SpectralEliminator\/}

\begin{algorithm}[t]
\caption{\SpectralEliminator\/}
 \label{alg:TransductiveElimination}
 \begin{algorithmic}
 \STATE {\bfseries Input:}
 \STATE \quad  $N:$ the number of nodes, $T:$ the number of pulls
\STATE \quad  $\{\bLambda_{\cL}, \bQ\}$ spectral basis of $\cL$
\STATE \quad  $\lambda:$ regularization  parameter
\STATE \quad  $\beta, \{t_j\}_j^J$ parameters of the elimination and phases
 \STATE \quad $A_1 \gets \{\bx_1,\dots,\bx_K\}$.
 \FOR {$j=1$ {\bfseries to} $J$}
 \STATE $\overline{\bV}_{t_j}\gets  \bLambda_{\cL} + \lambda \bI$
 \FOR{ $t=t_{j}$ {\bfseries to} $\min(t_{j+1}-1,T)$}
   \STATE Play $\bx_t\in A_j$ with the largest width to observe 
$r_t$:
   \STATE \quad $\bx_t \gets \arg\max_{\bx \in A_j} \|\bx\|_{\overline{\bV}_{t}^{-1}}$
   \STATE $\overline{\bV}_{t+1} \gets \overline{\bV}_{t}+\bx_t \bx_t\transpose$  
 \ENDFOR
\STATE Eliminate the arms that are not promising:
\STATE \quad $\hat\balpha_{j+1} \gets \overline{\bV}_{t+1}^{-1}
[\bx_{t_{j}},\dots,\bx_{t}]
[r_{t_{j}},\dots,r_{t}]\transpose$
 \STATE \quad  $p \gets \max_{\bx\in A_{j}} \Big[ \langle \hat\balpha_{j+1}, \bx
\rangle - \|\bx\|_{\overline{\bV}_{t+1}^{-1}}\beta \Big]$
 \STATE \quad  $A_{j+1} \gets  \Big\{\bx\in A_{j}, \langle\hat\balpha_{j+1}, \bx
\rangle + \|\bx\|_{\overline{\bV}_{t+1}^{-1}}\beta \geq p  \Big\}$
 \ENDFOR
 \end{algorithmic}
\end{algorithm}

\balance
It is known that the available upper bound for LinUCB or OFUL is not optimal for
the linear bandit
setting with finite number of arms in terms of dimension $D$.  On the other
hand, the algorithms SupLinRel or SupLinUCB
achieve the optimal $\sqrt{DT}$ regret.
In the following, we likewise provide an
algorithm that also scales better with $d$
and achieves $\sqrt{dT}$ regret.

The algorithm is called \SpectralEliminator\/
(Algorithm~\ref{alg:TransductiveElimination})
and works in phases, eliminating the arms
that are not promising. The  phases are defined by the time indexes $t_1=1\leq t_2\leq \dots$
and depend on some parameter $\beta$.
The algorithm is in a spirit similar to the Improved UCB by~\citet{auer2010ucb}.
The main idea
of \SpectralEliminator\/ is to divide the time steps into sets in order
to introduce independence and allow
the Azuma-Hoeffding inequality \cite{azuma1967weighted} to be applied.
In the following theorem we characterize the performance
of \SpectralEliminator\/ and show
that the upper bound on regret has $\sqrt{d}$ improvement 
over \SpectralUCB\/.

\vfill\break

\begin{theorem}
\label{thm:eliminator}
Choose the phases starts as  $t_j=2^{j-1}$. Assume all rewards are in $[0,1]$
and $\|\balpha^*\|_\bLambda\leq C$. For any $\delta>0$, with probability at least
$1-\delta$, the cumulative regret of \SpectralEliminator\/ algorithm run with
parameter $\beta=2R \sqrt{\!14\log(2K\!(1+\log_2 T)/\delta)} + C$ is bounded as:
\begin{align*}
R_T \leq 2 &+ 16\left(2R \sqrt{14 \log\frac{2K(1+\log_2
T)}{\delta}}+C + \frac{1}{2}\right) \\
&\times\sqrt{dT\log_2(T) \log\left(1 + T/\lambda\right)}
\end{align*}
\end{theorem}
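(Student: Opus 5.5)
The proof follows the standard phased-elimination template (as in SupLinRel / Improved UCB) in four steps: a per-phase confidence bound, a sum of widths bounded via the effective dimension, a per-phase regret bound, and a sum over phases.

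\emph{Step 1: confidence bound within a phase.} In phase $j$ the matrix $\overline{\bV}$ is reset to $\bLambda$. The arms $\bx_t$ are chosen by maximal width $\|\bx\|_{\overline{\bV}_t^{-1}}$, which depends only on earlier choices in the phase and never on observed rewards. Hence, conditionally on the phase's design, the noises $\varepsilon_t$ are independent of the design. For a fixed $\bx\in A_j$ we decompose
\[
\langle \hat\balpha_{j+1}-\balpha^*,\bx\rangle = \bx\transpose\overline{\bV}^{-1}\sum_t \bx_t\varepsilon_t - \bx\transpose\overline{\bV}^{-1}\bLambda\balpha^* .
\]
The bias term is bounded by Cauchy--Schwarz and $\bLambda\preceq\overline{\bV}$:
\[
|\bx\transpose\overline{\bV}^{-1}\bLambda\balpha^*|\le \|\bx\|_{\overline{\bV}^{-1}}\|\balpha^*\|_\bLambda\le C\|\bx\|_{\overline{\bV}^{-1}} .
\]
The noise term is a weighted sum of sub-Gaussian variables with variance proxy
\[
R^2\sum_t(\bx\transpose\overline{\bV}^{-1}\bx_t)^2\le R^2\|\bx\|^2_{\overline{\bV}^{-1}} .
\]
Azuma--Hoeffding then gives a deviation of at most $2R\sqrt{14\log(2K(1+\log_2T)/\delta)}\,\|\bx\|_{\overline{\bV}^{-1}}$. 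A union bound over the $K$ arms, the two sides, and the at most $1+\log_2 T$ phases shows that with probability $1-\delta$ we have $|\langle\hat\balpha_{j+1}-\balpha^*,\bx\rangle|\le\beta\|\bx\|_{\overline{\bV}^{-1}}$ for every $j$ and every $\bx\in A_j$. On this event the optimal arm is never eliminated. Moreover, every $\bx\in A_{j+1}$ has gap at most $4\beta\max_{\bx'\in A_j}\|\bx'\|_{\overline{\bV}_{t_{j+1}}^{-1}}$, by the usual comparison with the arm attaining $p$ and with the optimal arm.

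\emph{Step 2: bounding the final widths via $d$.} This is the main obstacle. Since widths only shrink as $\overline{\bV}$ grows and each played arm maximizes the width over $A_j$, the final maximal width in phase $j$ is at most the average of the widths of the played arms. With $T_j = t_{j+1}-t_j$ and Cauchy--Schwarz,
\[
\max_{\bx\in A_j}\|\bx\|_{\overline{\bV}_{t_{j+1}}^{-1}}\le \frac1{T_j}\sum_t\|\bx_t\|_{\overline{\bV}_t^{-1}}\le\frac{1}{T_j}\sqrt{T_j\sum_t\min(1,\|\bx_t\|^2_{\overline{\bV}_t^{-1}})} .
\]
The min with $1$ is harmless because $\|\bx_t\|^2_{\overline{\bV}_t^{-1}}\le 1/\lambda$; I would adjust constants if $\lambda<1$. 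The elliptical-potential argument gives
\[
\sum_t\min(1,\|\bx_t\|^2_{\overline{\bV}_t^{-1}})\le 2\log\frac{|\overline{\bV}_{t_{j+1}}|}{|\bLambda|} ,
\]
and Lemma~\ref{lem:logdetratio} bounds this log-determinant ratio by $2d\log(1+T/\lambda)$. This is exactly where Definition~\ref{def:effectived} enters. The resulting width is $O\big(\sqrt{d\log(1+T/\lambda)/T_j}\big)$.

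\emph{Step 3: regret per phase.} For $j\ge2$, every arm played in phase $j$ survived elimination at the end of phase $j-1$, so by Step 1 its gap is at most $4\beta\sqrt{4d\log(1+T/\lambda)/T_{j-1}}$. Multiplying by $T_j=2T_{j-1}$ gives a phase regret of order $(\beta+\tfrac12)\sqrt{dT_j\log(1+T/\lambda)}$. The first phase, plus the slack from the bias and from rewards lying in $[0,1]$, is absorbed into the additive $2$ and the $+\tfrac12$.

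\emph{Step 4: sum over phases.} There are at most $\log_2 T$ phases. By Cauchy--Schwarz, $\sum_j\sqrt{T_j}\le\sqrt{\log_2T\cdot T}$, which produces the factor $\sqrt{dT\log_2(T)\log(1+T/\lambda)}$. The constant $16$ comes from combining the factor $4$ in the gap bound, the factor $2$ from the doubling $T_j=2T_{j-1}$, and the factor $2$ in the log-determinant bound.
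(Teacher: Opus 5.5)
Your architecture matches the paper's: the per-phase confidence bound of Lemma~\ref{lem:3}, a union bound over $K$ arms and $J$ phases, survival of the optimal arm, and the $4\beta\cdot$(width) gap bound for surviving arms. It also uses the averaging argument of Lemma~\ref{lem:4}, the per-phase elliptical potential combined with Lemma~\ref{lem:logdetratio}, the doubling factor, and Cauchy--Schwarz over at most $\log_2 T$ phases. However, Step~2 has a genuine gap. The inequality $\sum_t\|\bx_t\|_{\overline{\bV}_t^{-1}}\le\sqrt{T_j\sum_t\min(1,\|\bx_t\|^2_{\overline{\bV}_t^{-1}})}$ requires every width to be at most $1$. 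But $\overline{\bV}_t\succeq\bLambda\succeq\lambda\bI$ only gives $\|\bx_t\|^2_{\overline{\bV}_t^{-1}}\le 1/\lambda$. So the $\min$ is harmless only for $\lambda\ge 1$, whereas the theorem holds for arbitrary $\lambda>0$ and the relevant regime is $\lambda<1$ (the paper uses $\lambda=0.01$). Without the $\min$, the potential argument gives only $\sum_t\|\bx_t\|^2_{\overline{\bV}_t^{-1}}\le\frac{1/\lambda}{\log(1+1/\lambda)}\log\frac{|\overline{\bV}_{t_{j+1}}|}{|\bLambda|}$. That is an extra $\lambda$-dependent factor absent from the claimed bound, so ``adjusting constants'' does not prove the theorem as stated.

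The missing step, which is what the paper does, is to insert the $\min$ at the level of the instantaneous regret, before passing to widths. Since rewards are bounded, the regret of a step in phase $j+1$ is at most $\min(2,4\beta w_j)$ with $w_j=\max_{\bx\in A_j}\|\bx\|_{\overline{\bV}_{t_{j+1}}^{-1}}$. Moreover, $\min(2,4\beta w)\le(4\beta+2)\min(1,w)$. Because $\min(1,\cdot)$ is monotone, your averaging argument goes through verbatim:
\[
\min(1,w_j)\le\frac{1}{T_j}\sum_{t=t_j}^{t_{j+1}-1}\min\bigl(1,\|\bx_t\|_{\overline{\bV}_t^{-1}}\bigr).
\]
This is Lemma~\ref{lem:4}, and Cauchy--Schwarz now legitimately lands on $\sum_t\min(1,\|\bx_t\|^2_{\overline{\bV}_t^{-1}})\le 2\log(|\overline{\bV}_{t_{j+1}}|/|\bLambda|)\le 4d\log(1+T/\lambda)$. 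The resulting constant is $(4\beta+2)\cdot 2\cdot 2=16(\beta+\tfrac12)$. So the $+\tfrac12$ in the statement comes from this capping, not from ``slack from the bias''; the bias is already inside $\beta$ through $C$. For $\lambda\ge 1$ your argument is correct as written and even gives $16\beta$ in place of $16(\beta+\tfrac12)$.
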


\vspace{-0.9em}
\subsection{Scalability and computational complexity}
\label{ssec:scalability}

There are three main computational issues to address in
order to make \SpectralUCB\/ scalable:
the computation of $N$ UCBs, matrix inversion, and obtaining the eigenbasis
which serves as an input to the algorithm.
First, to speed up the computation of $N$ UCBs in each time step, we use lazy
updates technique~\cite{desautels12parallelizing} which maintains a sorted
queue of UCBs and in practice leads to substantial
speed gains. Second, to speed up matrix inversion we do
iterative matrix inversion \cite{zhang2005schur}.

Finally, while the eigendecomposition of a general matrix is 
computationally difficult, Laplacians are symmetric diagonally
dominant (SDD). This enables us to use fast SDD solvers as CMG by
\citet{koutis2011combinatorial}. Furthermore, using CMG we can find good
approximations to the first
$L$ eigenvectors in $\cO(L m \log m)$ time, where $m$ is the number of edges in
the graph (e.g. $m=10N$ in the Flixster experiment).
CMG can easily work with $N$ in millions.
In general, we have $L = N$ but from our experience, a smooth reward function
can
be often approximated by dozens of eigenvectors. In fact, $L$ can be considered
as an upper bound on the number of eigenvectors we actually need.
Furthermore, by choosing small $L$ we not only reduce the complexity of
eigendecomposition but also the complexity of the least-square problem being
solved in each iteration.

Choosing a small $L$ can significantly reduce the computation
but it is important to choose $L$ large enough so that still less than
$J$ eigenvectors are enough. This way, the
problem that we solve is still relevant and our analysis applies.
In short, the problem cannot be solved trivially by choosing first $k$
relevant eigenvectors because $k$ is unknown. Therefore, in practice we choose
the largest $L$ such that our method is able to
run. In Section~\ref{ssec:reduced}, we  demonstrate
that we can obtain good results with relatively small~$L$.
\vspace{-0.4em}
\section{Analysis}
\label{sec:analysis}

The analysis of \SpectralUCB\/  (Section~\ref{sec:regretspectralucb}) has
two main ingredients. The first one
is the derivation of the confidence ellipsoid
for $\hat\balpha$, which is a straightforward update of 
the analysis of OFUL \cite{abbasi2011improved} using
self-normalized martingale inequality (Section~\ref{sec:conf}).
The second part is crucial
to prove that the final regret bound
scales only with the effective dimension $d$
and not with the ambient dimension $D$.
We achieve this by considering
the geometrical properties
of the determinant which hold in our setting
(Section~\ref{sec:effd}).
We also used this result
to upperbound the regret of
\SpectralEliminator\/ (Section~\ref{sec:regret_eliminator}).
  The proofs of the lemmas are in the appendix.

\vspace{-0.4em}
\subsection{Confidence ellipsoid}
\label{sec:conf}

The first two lemmas are by \citet{abbasi2011improved}
and we restate them for the convenience.

\begin{lemma}
\label{lem:selfnorm}
 Let $\bV_t = \bLambda + \sum_{s=1}^{t-1}\bx_{s} \bx_{s}\transpose $
and define $\bxi_t = \sum_{s=1}^{t-1} \varepsilon_s \bx_s$.
With probability at least $1-\delta$, $\forall t\geq 1$:
\begin{align*}
\|\bxi_{t}\|^2_{\bV_t^{-1}}
		\leq&  2 R^2 \log \left(\frac{|\bV_t|^{1/2}}
  {\delta\textbf{}|\bLambda|^{1/2}}\right)
\end{align*}
\end{lemma}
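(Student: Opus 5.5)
This is the self-normalized martingale inequality of \citet{abbasi2011improved}, with the identity regularizer $\lambda\bI$ replaced by the positive definite diagonal matrix $\bLambda$. The plan is to rerun their argument (the method of mixtures) with a Gaussian prior whose covariance is $\bLambda^{-1}$. Throughout, $\cF_{s}$ denotes the filtration generated by $\bx_1,\dots,\bx_{s+1}$ and $\varepsilon_1,\dots,\varepsilon_s$. With respect to it, $\bx_s$ is predictable and $\varepsilon_s$ is conditionally $R$-sub-Gaussian.

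\textbf{Step 1 (exponential supermartingale).} For a fixed $\bu\in\R^N$, define
$$M_t^{\bu}=\exp\Big(\tfrac{1}{R}\langle \bu,\bxi_t\rangle-\tfrac12\|\bu\|^2_{\bV_t-\bLambda}\Big).$$
Note that $\|\bu\|^2_{\bV_t-\bLambda}=\sum_{s<t}\langle\bu,\bx_s\rangle^2$. Conditional sub-Gaussianity gives
$$\mathbb{E}\big[\exp(\langle\bu,\bx_s\rangle\varepsilon_s/R-\langle\bu,\bx_s\rangle^2/2)\mid\cF_{s-1}\big]\le 1,$$
so $M_t^{\bu}$ is a nonnegative supermartingale with $\mathbb{E} M_t^{\bu}\le 1$.

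\textbf{Step 2 (mixture).} Let $\bu\sim\mathcal N(0,\bLambda^{-1})$ be independent of everything else, and set $\bar M_t=\mathbb{E}_{\bu}[M_t^{\bu}]$. By Fubini, $\bar M_t$ is still a nonnegative supermartingale with $\mathbb{E}\bar M_t\le1$. The Gaussian integral is then computed by completing the square. The exponent equals
$$\tfrac1R\langle\bu,\bxi_t\rangle-\tfrac12\bu\transpose(\bV_t-\bLambda)\bu-\tfrac12\bu\transpose\bLambda\bu=\tfrac1R\langle\bu,\bxi_t\rangle-\tfrac12\bu\transpose\bV_t\bu,$$
which gives
$$\bar M_t=\Big(\frac{|\bLambda|}{|\bV_t|}\Big)^{1/2}\exp\Big(\frac{1}{2R^2}\|\bxi_t\|^2_{\bV_t^{-1}}\Big).$$
This is the only place where $\bLambda$ enters, and it is why the ratio $|\bV_t|/|\bLambda|$ appears instead of $|\bV_t|/\lambda^N$.

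\textbf{Step 3 (uniform in time).} To make the bound hold for all $t$ simultaneously, use the stopping-time trick. Let $\tau$ be the first $t$ at which the inequality fails. The supermartingale convergence theorem shows that $\bar M_\infty$ exists, and then $\mathbb{E}\bar M_\tau\le1$ holds by Fatou's lemma. Markov's inequality gives $\P(\bar M_\tau\ge1/\delta)\le\delta$, which is exactly $\P(\tau<\infty)\le\delta$ after taking logarithms:
$$\tfrac{1}{2R^2}\|\bxi_t\|^2_{\bV_t^{-1}}\le\log\Big(\tfrac{|\bV_t|^{1/2}}{\delta|\bLambda|^{1/2}}\Big).$$
Alternatively, Ville's maximal inequality for nonnegative supermartingales gives this directly.

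The main care point is Step 3, in particular the measurability bookkeeping: $\bx_s$ may depend on past noise, which is why the stopped supermartingale argument is needed rather than a union bound. The Gaussian computation in Step 2 is routine once the prior covariance is chosen to be $\bLambda^{-1}$ so that it cancels against the regularizer.
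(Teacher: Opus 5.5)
Your method-of-mixtures proof is correct, and it is exactly the argument behind Theorem~1 of \citet{abbasi2011improved}, which the paper only cites. That theorem is already stated for an arbitrary positive definite regularizer $V$, so the lemma is a direct instance with $V=\bLambda$ and no modification of the argument is needed. One small correction to your closing remark: the dependence of $\bx_s$ on past noise is handled by predictability in Step~1, while the stopping-time argument in Step~3 is what gives uniformity over all $t$ without a union bound.
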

\begin{lemma}\label{lemma:sumx2}		
For any $t$, let $\bV_t = \bLambda + \sum_{s=1}^{t-1}\bx_{s} \bx_{s}\transpose$. Then:
$$\sum_{s = 1}^t\min\left(1,\,\|\bx_s\|^2_{\bV_{s}^{-1}}\right)\leq 2\log\frac{|\bV_{t+1}|}{|\bLambda|}$$
\end{lemma}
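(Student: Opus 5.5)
We prove Lemma~\ref{lemma:sumx2} by the standard determinant-telescoping argument. The plan is to first show that the determinant of $\bV_{s}$ grows multiplicatively with each update. Then we take logarithms and telescope. Finally, we compare $\min(1,u)$ with $\log(1+u)$ term by term.

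First, the multiplicative step. Since $\bV_{s+1} = \bV_s + \bx_s\bx_s\transpose$ and $\bV_s \succeq \bLambda \succ 0$ is invertible, we write
\[
\bV_{s+1} = \bV_s^{1/2}\left(\bI + \bV_s^{-1/2}\bx_s\bx_s\transpose\bV_s^{-1/2}\right)\bV_s^{1/2}.
\]
The matrix in the middle is the identity plus a rank-one matrix, so its determinant is $1+\|\bV_s^{-1/2}\bx_s\|_2^2 = 1+\|\bx_s\|^2_{\bV_s^{-1}}$; this is the matrix determinant lemma. Hence
\[
|\bV_{s+1}| = |\bV_s|\left(1+\|\bx_s\|^2_{\bV_s^{-1}}\right).
\]
Iterating from $\bV_1=\bLambda$ up to $s=t$ gives
\[
\log\frac{|\bV_{t+1}|}{|\bLambda|}=\sum_{s=1}^t \log\left(1+\|\bx_s\|^2_{\bV_s^{-1}}\right).
\]

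Second, the scalar comparison. We need $\min(1,u)\le 2\log(1+u)$ for all $u\ge 0$, and we check it in two cases.
\begin{itemize}
\item For $u\in[0,1]$, concavity of $\log(1+u)$ gives $\log(1+u)\ge u\log 2$, and $2\log 2>1$, so $u\le 2\log(1+u)$.
\item For $u>1$, the right-hand side is at least $2\log 2>1$, while the left-hand side equals $1$.
\end{itemize}
Applying this with $u=\|\bx_s\|^2_{\bV_s^{-1}}$ and summing over $s=1,\dots,t$ yields the claim.

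The only step needing any care is the determinant identity. It is routine once we factor through $\bV_s^{1/2}$ and use that $\bI+\bu\bu\transpose$ has eigenvalues $1+\|\bu\|^2$ (once) and $1$ (all others). Nothing here uses the spectral structure of $\bLambda$ beyond its positive definiteness.
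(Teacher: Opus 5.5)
Your proof is correct and matches the argument the paper relies on. The paper does not prove this lemma itself but restates it from Abbasi-Yadkori et al.\ (2011), whose proof uses exactly your three steps: the rank-one update $|\bV_{s+1}| = |\bV_s|\bigl(1+\|\bx_s\|^2_{\bV_s^{-1}}\bigr)$, telescoping from $\bV_1=\bLambda$, and the scalar bound $\min(1,u)\le 2\log(1+u)$ for $u\ge 0$. You are also right that only positive definiteness of $\bLambda$ is used, which is why the lemma carries over unchanged to the spectral regularizer.
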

%
\vspace{-.4em}

The next lemma is a generalization of Theorem~2 by~\citet{abbasi2011improved}
to the regularization with~$\bLambda$.
The result of this lemma is also used 
for the confidence coefficient $c$
in Algorithm~\ref{alg:TUCB}, which
we upperbound in Section~\ref{sec:effd}
to avoid the computation of determinants.

\begin{lemma}
\label{lem:confinterval}
 Let $\bV_t = \bLambda + \sum_{s=1}^{t-1}\bx_{s} \bx_{s}\transpose$
and $\| \balpha^* \|_{\bLambda} \leq C$. With probability at least $1-\delta$,
for any $\bx$  and $t\geq 1$:
\begin{align*}
|\bx\transpose (\hat \balpha_t-\balpha^*)|\leq
\|\bx  \|_{\bV_t^{-1}} \left(R
\sqrt{2\log \left(\frac{|\bV_t|^{1/2}}
  {\delta\textbf{}|\bLambda|^{1/2}}\right)} + C \right)
\end{align*}
\end{lemma}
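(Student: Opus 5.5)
The plan follows the proof of Theorem~2 of \citet{abbasi2011improved}, with $\lambda \bI$ replaced by $\bLambda$. First I write the estimator in closed form. By definition $\hat\balpha_t = \bV_t^{-1}\bX_t\transpose \br_t$. Substituting $\br_t = \bX_t\balpha^* + \boldsymbol\varepsilon$ and using $\bX_t\transpose\bX_t = \bV_t - \bLambda$ gives
\[
\hat\balpha_t - \balpha^* = \bV_t^{-1}\bxi_t - \bV_t^{-1}\bLambda\balpha^*,
\]
where $\bxi_t = \sum_{s<t}\varepsilon_s\bx_s$ as in Lemma~\ref{lem:selfnorm}. This is a purely algebraic step, and it holds for every $t$ simultaneously.

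Next, for any $\bx$ I bound each of the two terms of $\bx\transpose(\hat\balpha_t-\balpha^*)$ by Cauchy--Schwarz in the $\bV_t^{-1}$ geometry. For the noise term, $|\bx\transpose \bV_t^{-1}\bxi_t| \le \|\bx\|_{\bV_t^{-1}}\|\bxi_t\|_{\bV_t^{-1}}$. For the bias term,
\[
|\bx\transpose\bV_t^{-1}\bLambda\balpha^*| \le \|\bx\|_{\bV_t^{-1}}\,\|\bLambda\balpha^*\|_{\bV_t^{-1}}.
\]
It remains to show $\|\bLambda\balpha^*\|_{\bV_t^{-1}} \le \|\balpha^*\|_{\bLambda}$. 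Since $\bV_t \succeq \bLambda$, we have $\bV_t^{-1}\preceq\bLambda^{-1}$. Hence
\[
\|\bLambda\balpha^*\|^2_{\bV_t^{-1}} = (\balpha^*)\transpose\bLambda\bV_t^{-1}\bLambda\balpha^* \le (\balpha^*)\transpose\bLambda\bLambda^{-1}\bLambda\balpha^* = \|\balpha^*\|_{\bLambda}^2 \le C^2.
\]
This operator-monotonicity step is the only place where the generalization from $\lambda\bI$ to $\bLambda$ needs care. It requires $\bLambda$ to be positive definite, which holds since $\lambda_1 = \lambda > 0$.

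Finally, I invoke Lemma~\ref{lem:selfnorm}. On an event of probability at least $1-\delta$, uniformly over all $t \ge 1$, it gives
\[
\|\bxi_t\|_{\bV_t^{-1}} \le R\sqrt{2\log\bigl(|\bV_t|^{1/2}/(\delta|\bLambda|^{1/2})\bigr)}.
\]
Combining the two bounds yields the claimed inequality. It holds for all $\bx$ because the only probabilistic event used does not depend on $\bx$, and for all $t$ because Lemma~\ref{lem:selfnorm} is uniform in $t$.

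I do not expect a real obstacle. The main points to check are the sign conventions in the decomposition and the matrix inequality $\bLambda\bV_t^{-1}\bLambda \preceq \bLambda$. That inequality is the step that replaces the $\lambda^{1/2}S$ term of the original OFUL bound with $C$.
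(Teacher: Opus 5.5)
Your proposal is correct and matches the paper's proof step for step. Both use the same decomposition $\hat\balpha_t-\balpha^* = \bV_t^{-1}\bxi_t - \bV_t^{-1}\bLambda\balpha^*$, Cauchy--Schwarz in the $\bV_t^{-1}$ inner product, Lemma~\ref{lem:selfnorm} for the noise term, and $\|\bLambda\balpha^*\|_{\bV_t^{-1}}\le\|\balpha^*\|_{\bLambda}$ for the bias term; your argument via $\bV_t^{-1}\preceq\bLambda^{-1}$, hence $\bLambda\bV_t^{-1}\bLambda\preceq\bLambda$, supplies the justification for that last inequality, which the paper states without comment.
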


\vspace{-.5em}



\vspace{-1.2em}
\subsection{Effective dimension}
\label{sec:effd}

In Section~\ref{sec:conf} we show that several quantities scale with
 $\log(|\bV_t|/|\bLambda|)$, which
can be of the order of $D$. 
Therefore, in this part we present the key ingredient
of our analysis based on the geometrical
properties of determinants (Lemmas~\ref{lemma:2} and~\ref{lemma:detgeo}) to
upperbound $\log(|\bV_t|/|\bLambda|)$ by a term
that scales with $d$ (Lemma~\ref{lem:logdetratio}). Not only this will allow us
to show that the regret
scales with $d$, but it also helps to avoid the costly
computation of the determinants in Algorithm~\ref{alg:TUCB}.

\begin{lemma}\label{lemma:2}
Let $\bLambda=\mbox{diag}(\lambda_1,\dots,\lambda_N)$
be any diagonal matrix with strictly positive entries and for any vectors
$(\bx_s)_{1\leq s< t}$
 such that $\|\bx_s \|_2\leq 1$ for all $1\leq s< t$, we have that the
 determinant $|\bV_{t}|$ of $\bV_{t}=\bLambda + \sum_{s=1}^{t-1} \bx_s \bx_s\transpose$
 is maximized when all $\bx_s$ are  aligned with the axes.
\end{lemma}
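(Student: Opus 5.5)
The plan is to study a maximizer directly. By compactness of $\{\|\bx_s\|_2\le 1\}^{t-1}$ and continuity of the determinant, $|\bV_t|$ attains its maximum. I would derive first-order conditions at a maximizer, show they force $\bV_t$ to share an eigenbasis with $\bLambda$, and then try to replace the vectors by axis-aligned ones without decreasing $|\bV_t|$.

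\textbf{Step 1 (each vector is an eigenvector of $\bV_t$).} Fix all vectors but $\bx_s$ and put $\bA=\bV_t-\bx_s\bx_s\transpose\succ 0$. The matrix determinant lemma gives $|\bV_t|=|\bA|\,(1+\bx_s\transpose\bA^{-1}\bx_s)$. This is a convex quadratic in $\bx_s$, so over the unit ball it is maximized at a unit eigenvector of $\bA$ for its smallest eigenvalue $\mu$. Hence at a maximizer $\|\bx_s\|_2=1$ and $\bV_t\bx_s=\bA\bx_s+\bx_s=(\mu+1)\bx_s$. So every $\bx_s$ is an eigenvector of $\bV_t$.

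\textbf{Step 2 ($\bLambda$ respects the eigenspaces of $\bV_t$).} Group the $\bx_s$ by the eigenspace $E$ of $\bV_t$ containing them, and let $\nu_E$ be the corresponding eigenvalue. Eigenspaces of the symmetric matrix $\bV_t$ are mutually orthogonal, so $\bx_s\perp E$ whenever $\bx_s\notin E$. For $\bv\in E$ this gives
\[
\bLambda\bv=\nu_E\bv-\sum_{s:\,\bx_s\in E}\bx_s(\bx_s\transpose\bv)\in E .
\]
Thus every $E$ is $\bLambda$-invariant and is spanned by coordinate axes, after choosing the basis appropriately within ties of $\bLambda$. On $E$ the matrix $S_E=\sum_{\bx_s\in E}\bx_s\bx_s\transpose=\nu_E\bI-\bLambda|_E$ is diagonal in the axis basis. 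Its diagonal entries are $n_i=\nu_E-\lambda_i\ge 0$, and they sum to $m_E$, the number of vectors in $E$. Consequently $|\bV_t|=\prod_i(\lambda_i+n_i)$, where $n_i$ is the accumulated squared mass on axis $i$.

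\textbf{Step 3 (the main obstacle: integrality).} Axis-aligned vectors can only produce integer $n_i$. It therefore remains to show that the maximizing configuration can be taken with every $n_i\in\mathbb{N}$. I would first try an exchange argument. If two axes $i,j$ in the same $E$ have fractional $n_i,n_j$, shift mass between them along the curve $n_i+\theta$, $n_j-\theta$ and use that $\bV_t$ stays realizable by Schur--Horn. I expect this step to be genuinely delicate. On the $\theta$-curve, $\log\prod_i(\lambda_i+n_i)$ is concave, and the equal-value point $\lambda_i+n_i=\nu_E$ is exactly its maximizer, so rounding to integers can decrease the determinant.

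A small check is $\bLambda=\mathrm{diag}(1,1.5)$ with two vectors. The best axis-aligned value is $2\cdot 2.5=5$. The equalized configuration $n=(1.25,0.75)$ is realizable by two unit vectors and gives $2.25^2=5.0625$.

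So the plan is to carry out Steps 1--2 rigorously. For the stated claim to go through, Step 3 must be repaired. The most natural repair is to prove instead that $|\bV_t|\le\prod_i(\lambda_i+n_i)$ for nonnegative $n_i$ with $\sum_i n_i\le t-1$. That form is exactly what Lemma~\ref{lemma:detgeo} and Lemma~\ref{lem:logdetratio} need downstream. I would check whether the later argument tolerates fractional $n_i$ before committing to the literal axis-aligned formulation.
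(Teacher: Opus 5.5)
Your counterexample is correct, and it settles the question: the lemma as stated is false, so no proof of it exists. Take $\bLambda=\mathrm{diag}(1,1.5)$ and $\bx_{1,2}=(\sqrt{5/8},\pm\sqrt{3/8})$. Then $\bx_1\bx_1\transpose+\bx_2\bx_2\transpose=\mathrm{diag}(1.25,0.75)$, so $\bV_3=2.25\,\bI$ and $|\bV_3|=5.0625$. Every axis-aligned choice, even with norms below $1$, gives at most $2\cdot 2.5=5$.

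The paper's argument agrees with your Steps~1--2 up to its last sentence. Sylvester's identity and Lemma~\ref{lemma:1} make each $\bx_i$ an eigenvector of $\bV_{-i}$, hence of $\bV_t$. Therefore $\bV_t$, $\sum_s\bx_s\bx_s\transpose$ and $\bLambda$ commute. The paper then concludes that $\bV_t$ is diagonal \emph{and} that every $\bx_s$ is a coordinate vector. The second conclusion is exactly the inference your Step~2 refuses to make. When $\bV_t$ has a repeated eigenvalue, its eigenvectors need not be axis-aligned even though $\bV_t$ is diagonal. A repeated eigenvalue is precisely what the water-filling optimum produces. Your example lies inside the paper's ``simple multiplicities'' case: $\bLambda$ has distinct entries, and each $\bV_{-i}=2.25\,\bI-\bx_i\bx_i\transpose$ has simple eigenvalues $1.25$ and $2.25$. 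So the paper's perturbation argument cannot rescue the claim.

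Your proposed repair is the right statement, and it costs nothing downstream. Lemma~\ref{lemma:detgeo} already maximizes over positive \emph{real} $t_i$ with $\sum_i t_i=t-1$. Only its intermediate passage through axis-aligned, integer configurations relies on the false lemma; its final bound is true.

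Your Steps~1--2 do prove the repaired bound. Step~1 also correctly avoids the simple-spectrum hypothesis and the perturbation the paper needs: every maximizer of $\bx\transpose\bA^{-1}\bx$ on the unit ball lies in the bottom eigenspace of $\bA$, whatever its dimension.

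You can, however, skip the compactness and first-order analysis entirely. Hadamard's inequality gives $|\bV_t|\le\prod_i(\bV_t)_{ii}=\prod_i(\lambda_i+n_i)$, where $n_i=\sum_s(\bx_s)_i^2\ge 0$ and $\sum_i n_i=\sum_s\|\bx_s\|_2^2\le t-1$. Monotonicity in each $n_i$ then lets you take the sum equal to $t-1$. State that as the lemma, commit to it rather than hedging about Step~3 being ``delicate'', and drop the axis-aligned formulation.
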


\begin{lemma}\label{lemma:detgeo}
Let $\bV_{t}=\bLambda + \sum_{s=1}^{t-1} \bx_s \bx_s\transpose $. Then:
\vspace{-.3em}
$$\log\frac{|\bV_{t}|}{| \bLambda  |} \leq \max \sum_{i=1}^N
\vspace{-.3em}
\log\Big(1+\frac{t_i}{\lambda_i}\Big),$$
where the maximum is taken over all possible positive real
numbers $\{t_1,\dots,t_N\}$, such that $\sum_{i=1}^N t_i = t-1$.
\end{lemma}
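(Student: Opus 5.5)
The plan is to combine Lemma~\ref{lemma:2} with the rank-one determinant identity. By Lemma~\ref{lemma:2}, the determinant $|\bV_t|$ over all admissible sequences $(\bx_s)_{1\le s<t}$ with $\|\bx_s\|_2\le 1$ is maximized when every $\bx_s$ is aligned with a coordinate axis. It therefore suffices to bound $\log(|\bV_t|/|\bLambda|)$ for such axis-aligned configurations.

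First I would argue that axis-aligned vectors should be taken with unit norm. Replacing $\bx_s$ by $\bx_s/\|\bx_s\|_2$ adds a positive semidefinite matrix to $\bV_t$, so the determinant does not decrease. Hence we may assume $\bx_s=\pm\be_{i_s}$ for some index $i_s\in\{1,\dots,N\}$, and the sign is irrelevant because only $\bx_s\bx_s\transpose$ enters $\bV_t$.

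Next, let $n_i$ denote the number of indices $s<t$ with $i_s=i$. Then $\bV_t=\bLambda+\mathrm{diag}(n_1,\dots,n_N)$ is diagonal, so
\[
\log\frac{|\bV_t|}{|\bLambda|}=\sum_{i=1}^N\log\frac{\lambda_i+n_i}{\lambda_i}=\sum_{i=1}^N\log\Big(1+\frac{n_i}{\lambda_i}\Big),
\]
with $n_i\ge 0$ and $\sum_i n_i=t-1$. The integers $n_i$ are a feasible choice of $t_i$ in the stated maximum. If ``positive'' is read strictly, one can either allow zeros by continuity or note that the supremum over nonnegative reals equals the one over positive reals. Taking the maximum over all real allocations $t_i$ can only enlarge the right-hand side, which gives the claim.

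The main subtlety is the reduction step. Lemma~\ref{lemma:2} as stated asserts maximization at axis-aligned vectors but not at unit norms, so the monotonicity remark above, that $\bA\succeq\bB\succ0$ implies $|\bA|\ge|\bB|$, must be inserted explicitly. If one prefers to avoid relying on the exact form of Lemma~\ref{lemma:2}, there is a direct alternative. Write $\log|\bV_t|$ and apply Hadamard's inequality $|\bV_t|\le\prod_i(\bV_t)_{ii}$. The diagonal entries are $(\bV_t)_{ii}=\lambda_i+\sum_s(\bx_s)_i^2$, so setting $t_i=\sum_s(\bx_s)_i^2$ gives $\sum_i t_i=\sum_s\|\bx_s\|_2^2\le t-1$. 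Finally, $\sum_i\log(1+t_i/\lambda_i)$ is monotone in each $t_i$, so any slack can be added to one coordinate to reach $\sum_i t_i = t-1$. This Hadamard route is self-contained, and I would use it as the primary proof.
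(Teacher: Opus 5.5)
Your proposal is correct. Your primary (Hadamard) route differs genuinely from the paper's; your secondary route is essentially the paper's argument.

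\textbf{The paper's route.} It invokes Lemma~\ref{lemma:2} to conclude that $|\bV_t|$ is maximized when every $\bx_s$ is a canonical vector $\be_i$. It then writes the maximum as $\max|\mathrm{diag}(\lambda_i+n_i)|$ over integer counts $n_i$ summing to $t-1$, and relaxes the integers to reals. Taking the aligned vectors to be the unit vectors themselves is exactly what your monotonicity-of-determinant remark justifies.

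\textbf{What your route buys.} Your bound $|\bV_t|\le\prod_i\bigl(\lambda_i+\sum_s(\bx_s)_i^2\bigr)$, with $t_i=\sum_s(\bx_s)_i^2$, bypasses Lemma~\ref{lemma:2} and its supporting Lemma~\ref{lemma:1} entirely. That is a real gain in rigor. The paper's proof of Lemma~\ref{lemma:2} relies on coordinate-wise optimality at a maximizer, a simple-spectrum assumption, and a fairly informal perturbation argument to handle repeated eigenvalues. Hadamard's inequality, by contrast, is a standard fact that needs only $\bLambda$ diagonal and $\bV_t$ positive definite. Your argument also needs no integer-to-real relaxation, and it uses only the aggregate condition $\sum_s\|\bx_s\|_2^2\le t-1$ rather than the individual bounds $\|\bx_s\|_2\le 1$.

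\textbf{What the paper's route buys.} It gives the structural statement that the worst case is attained by axis-aligned pulls, which is the geometric intuition behind the effective dimension. The equality case of Hadamard's inequality (diagonal $\bV_t$) recovers the same information, so you lose nothing.

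Your remark about reading ``positive'' as ``nonnegative'' is warranted. The water-filling optimum generally puts $t_i=0$ on coordinates with large $\lambda_i$, so the maximum over strictly positive reals is not attained. The paper's own ``positive integers'' must likewise be read as nonnegative, since $t-1$ can be smaller than $N$.
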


\begin{lemma}\label{lem:logdetratio}
Let $d$ be the effective dimension and T be the time horizon of the algorithm. Then:
$$\log\frac{|\bV_{T+1}|}{| \bLambda  |} \leq 2 d \log\left(1+\frac{T}{\lambda}\right)$$
\end{lemma}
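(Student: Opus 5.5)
Start from Lemma~\ref{lemma:detgeo} with $t = T+1$: it suffices to show that for any nonnegative reals $t_1,\dots,t_N$ with $\sum_i t_i = T$,
\[
\sum_{i=1}^N \log\Big(1+\frac{t_i}{\lambda_i}\Big) \leq 2d\log\Big(1+\frac{T}{\lambda}\Big).
\]
Split the sum at index $d$ (the effective dimension of Definition~\ref{def:effectived}) into a ``head'' $i\le d$ and a ``tail'' $i>d$, and bound each part by $d\log(1+T/\lambda)$.

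\emph{Head.} For $i\le d$, use $t_i\le T$ and $\lambda_i\ge\lambda_1=\lambda$. Each term is then at most $\log(1+T/\lambda)$, so the head is at most $d\log(1+T/\lambda)$.

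\emph{Tail.} For $i>d$, use $\log(1+x)\le x$ and the monotonicity of the eigenvalues, $\lambda_i\ge\lambda_{d+1}$. This gives
\[
\sum_{i>d}\log\Big(1+\frac{t_i}{\lambda_i}\Big)\le \sum_{i>d}\frac{t_i}{\lambda_{d+1}}\le \frac{T}{\lambda_{d+1}}.
\]
Next apply the maximality in Definition~\ref{def:effectived}. Since $d$ is the largest integer satisfying the inequality, $d+1$ violates it:
\[
d\,\lambda_{d+1} > \frac{T}{\log(1+T/\lambda)}.
\]
Rearranging, $T/\lambda_{d+1} < d\log(1+T/\lambda)$, so the tail is also at most $d\log(1+T/\lambda)$. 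Adding the head and tail bounds gives the claimed $2d\log(1+T/\lambda)$.

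\emph{Edge case.} If $d=N$, there is no tail and the head bound alone suffices. There is also no issue at $d=1$, because the condition $(d-1)\lambda_d\le\cdots$ holds trivially there, so $d\ge1$.

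\emph{Main obstacle.} The only real subtlety is the tail step. One must use the \emph{failure} of the defining inequality at $d+1$, not its validity at $d$, and check that the indexing (the factor $d$ rather than $d-1$ multiplying $\lambda_{d+1}$) lines up with the definition. Everything else follows directly from Lemma~\ref{lemma:detgeo}, and since the bound holds for every feasible $(t_i)$, it holds for the maximum in that lemma.
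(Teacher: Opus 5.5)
Your proof is correct and follows essentially the same route as the paper: invoke Lemma~\ref{lemma:detgeo}, bound each of the first $d$ terms by $\log(1+T/\lambda)$, bound the remaining terms by $T/\lambda_{d+1}$ via $\log(1+x)\le x$, and then use the definition of $d$. You are more explicit than the paper on the final step: you correctly note that what is needed is the failure of the defining inequality at $d+1$, namely $d\lambda_{d+1} > T/\log(1+T/\lambda)$, and you also handle the case $d=N$.
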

\vspace{-1em}
\subsection{Cumulative regret of \SpectralUCB\/}
\label{sec:regretspectralucb}

\begin{proof}[Proof of Theorem~\ref{thm:tucb}]
Let $\bx_{*} = \argmax_{\bx_v} \bx_v\transpose\balpha^*$ and let $\bar r_t$ denote the instantaneous regret at time $t$. With probability at least $1-\delta$, for all $t$:
\begin{align}
\bar r_t  &= \bx_{*}\transpose\balpha^* - \bx_t\transpose\balpha^*  \nonumber \\
	&\leq  \bx_t\transpose\hat\balpha_t + c\|\bx_t\|_{\bV_t^{-1}} - \bx_t\transpose\balpha^* \label{ThRuseOFU} \\
        &\leq  \bx_t\transpose\hat\balpha_t + c\|\bx_t\|_{\bV_t^{-1}} - \bx_t\transpose\hat\balpha_t + c\|\bx_t\|_{\bV_t^{-1}}   \label{ThRuseConf} \\
	& = 2 c \|\bx_t\|_{\bV_t^{-1}}. \nonumber
\end{align}
The inequality \eqref{ThRuseOFU} is by the algorithm design and
reflects the optimistic principle of \SpectralUCB\/. Specifically,
$ \bx_{*}\transpose\hat{\balpha}_t + c\|\bx_{*}\|_{\bV_t^{-1}} \leq \bx_t\transpose\hat\balpha_t + c\|\bx_{t}\|_{\bV_t^{-1}},
$
from which:
$$
 \bx_{*}\transpose\balpha^*  \leq \bx_{*}\transpose\hat{\balpha}_t + c\|\bx_{*}\|_{\bV_t^{-1}}\leq \bx_t\transpose\hat\balpha_t + c\|\bx_{t}\|_{\bV_t^{-1}}  \\
$$

In~\eqref{ThRuseConf} we applied Lemma~\ref{lem:confinterval}:
$
 \bx_t\transpose\hat\balpha_t \leq \bx_{t}\transpose\balpha^* + c\|\bx_t\|_{\bV_t^{-1}}.
$
Finally, by Lemmas~\ref{lemma:sumx2} and~\ref{lem:logdetratio}:
\begin{align*}
R_T &= \sum_{t=1}^T \bar r_t \leq \sum_{t=1}^T \min\left(2,\,2c\|\bx_t\|_{\bV_t^{-1}}\right) \\
&\leq (2+2c)\sum_{t=1}^T \min\left(1,\,\|\bx_t\|_{\bV_t^{-1}}\right)\\
&\leq(2+2c)\sqrt{T\sum_{t=1}^T \min\left(1,\,\|\bx_t\|_{\bV_t^{-1}}^2\right)}\\
&\leq(2+2c)\sqrt{2T\log(|\bV_{T+1}|/|\bLambda|)} \\
&\leq(2+2c)\sqrt{4dT\log\left(1 + T/\lambda\right)}\\
\end{align*}

\vspace{-2em}

By plugging $c$, we get that with probability at least $1-\delta$, the theorem holds.
\end{proof}

\begin{remark}
\label{rem:linucb}
Notice that if we set $\bLambda =
\bI$ in Algorithm~\ref{alg:TUCB}, we recover LinUCB. Since
$\log(|\bV_{T+1}|/|\bLambda|)$ can be upperbounded by $D \log T$
\cite{abbasi2011improved}, we obtain $\tilde\cO(D\sqrt{T})$ upper bound
of regret of LinUCB as a corollary of Theorem~\ref{thm:tucb}.
\end{remark}

\vspace{-0.4em}
\subsection{Cumulative regret of \SpectralEliminator\/}
\label{sec:regret_eliminator}

The probability space induced by the rewards $r_1,r_2,\dots$ can be decomposed
as a product of independent probability spaces induced by rewards in each phase
$[t_j, t_{j+1}-1]$. Denote by $\cF_j$ the $\sigma$-algebra generated by
the rewards $r_1,\dots,r_{t_{j+1}-1}$, i.e.,~received before and during the
phase~$j$.
We have the following two lemmas for any phase $j$, where $\overline{\bV}_j = \bLambda + \sum_{t = t_{j-1}}^{t_j-1}\bx_t\bx_t\transpose$. 

\begin{lemma}\label{lem:3}
For any fixed $\bx\in\R^N$, any $\delta>0$, and
 $\beta(\delta)=2R \sqrt{14\log(2/\delta)} + \|\balpha^*\|_\bLambda  $, we have 
for any phase~$j$:
$$ \P\Big( |\bx\transpose (\hat\balpha_j-\balpha^*)| \leq \|\bx\|_{\overline{\bV}_{j}^{-1}}
\beta(\delta) \Big) \geq 1-\delta$$
\end{lemma}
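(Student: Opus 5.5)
We decompose the error of the phase-$j$ estimator into a noise part and a regularization-bias part, and bound each separately. Within phase $j-1$ (the phase whose data define $\hat\balpha_j$), the arms $\bx_t$ for $t\in[t_{j-1},t_j-1]$ are chosen by maximizing $\|\bx\|_{\overline{\bV}_t^{-1}}$ over the active set. The active set is determined by $\cF_{j-2}$, and the within-phase recursion for $\overline{\bV}_t$ does not involve the rewards. So, conditionally on $\cF_{j-2}$, the design $\bx_{t_{j-1}},\dots,\bx_{t_j-1}$ is deterministic. This is the key independence property used throughout.

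Writing $\bX$ for the matrix with these rows and $\bvarepsilon$ for the noise vector (we would write it with available macros, e.g. $\boldsymbol{\varepsilon}$), we have $\hat\balpha_j = \overline{\bV}_j^{-1}\bX\transpose(\bX\balpha^*+\boldsymbol{\varepsilon})$. Using $\overline{\bV}_j=\bX\transpose\bX+\bLambda$,
\[
\bx\transpose(\hat\balpha_j-\balpha^*) = \bx\transpose\overline{\bV}_j^{-1}\bX\transpose\boldsymbol{\varepsilon} - \bx\transpose\overline{\bV}_j^{-1}\bLambda\balpha^*.
\]

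\emph{Bias term.} By Cauchy--Schwarz in the $\overline{\bV}_j^{-1}$ inner product,
\[
|\bx\transpose\overline{\bV}_j^{-1}\bLambda\balpha^*| \le \|\bx\|_{\overline{\bV}_j^{-1}}\,\|\bLambda\balpha^*\|_{\overline{\bV}_j^{-1}}.
\]
Since $\overline{\bV}_j\succeq\bLambda$, we get $\|\bLambda\balpha^*\|^2_{\overline{\bV}_j^{-1}}\le \balpha^{*\mathsf{T}}\bLambda\bLambda^{-1}\bLambda\balpha^* = \|\balpha^*\|^2_\bLambda$. This yields the additive $\|\balpha^*\|_\bLambda$ in $\beta(\delta)$.

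\emph{Noise term.} Conditionally on $\cF_{j-2}$, the quantity $\sum_t \varepsilon_t\, \bx\transpose\overline{\bV}_j^{-1}\bx_t$ is a weighted sum of independent $R$-sub-Gaussian variables with fixed weights $a_t=\bx\transpose\overline{\bV}_j^{-1}\bx_t$. A Hoeffding/sub-Gaussian tail bound (or Azuma--Hoeffding) gives, with probability $1-\delta$,
\[
\Big|\sum_t a_t\varepsilon_t\Big|\le R\sqrt{2\log(2/\delta)\textstyle\sum_t a_t^2}.
\]
We then have
\[
\sum_t a_t^2 = \bx\transpose\overline{\bV}_j^{-1}\bX\transpose\bX\overline{\bV}_j^{-1}\bx \le \bx\transpose\overline{\bV}_j^{-1}\bx = \|\bx\|^2_{\overline{\bV}_j^{-1}},
\]
because $\bX\transpose\bX\preceq\overline{\bV}_j$. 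This gives a noise bound $R\sqrt{2\log(2/\delta)}\,\|\bx\|_{\overline{\bV}_j^{-1}}$, which is comfortably below $2R\sqrt{14\log(2/\delta)}\,\|\bx\|_{\overline{\bV}_j^{-1}}$. The generous constant in the statement presumably comes from a cruder Azuma form, so it suffices to show our bound is dominated by it. Integrating the conditional bound over $\cF_{j-2}$ gives the unconditional probability $1-\delta$.

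\emph{Main obstacle.} The delicate step is justifying that the weights $a_t$ are fixed given $\cF_{j-2}$. This requires checking that the algorithm resets $\overline{\bV}$ at each phase start and uses only that phase's rewards in $\hat\balpha_j$, so no reward from the current phase influences which arms are pulled in it. Once this is established, the remaining steps are the routine matrix inequalities above.
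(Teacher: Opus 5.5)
Your proof is correct and follows the paper's argument step for step. Both use the same bias/noise split, bound the bias term via $\overline{\bV}_j \succeq \bLambda$, observe that the phase-$(j-1)$ design is deterministic given $\cF_{j-2}$, and obtain $\sum_t a_t^2 \le \|\bx\|^2_{\overline{\bV}_j^{-1}}$ from $\bX\transpose\bX \preceq \overline{\bV}_j$. The only difference is that you apply a direct sub-Gaussian Chernoff bound with fixed weights, which still holds for merely conditionally sub-Gaussian noise by the tower property, where the paper invokes Shamir's variant of Azuma's inequality; this is why you get the sharper constant $R\sqrt{2\log(2/\delta)}$, which is dominated by the stated $2R\sqrt{14\log(2/\delta)}$.
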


\begin{lemma}\label{lem:4}For all $\bx\in A_j$, $j>1$, we have:
$$\min\left(1,\,\|\bx\|_{\overline{\bV}_{j}^{-1}}\right) \leq \frac{1}{t_{j}-t_{j-1}} \sum_{s=t_{j-1}}^{t_{j}-1}
\min\left(1,\,\|\bx_s\|_{\bV_{s}^{-1}}\right)$$
\end{lemma}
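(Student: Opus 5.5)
The proof is a pointwise comparison at every step of phase $j-1$, followed by averaging over that phase. Throughout, $\bV_s$ for $t_{j-1}\le s\le t_j-1$ is read as the within-phase matrix $\overline{\bV}_s$ of Algorithm~\ref{alg:TransductiveElimination}. That is, $\bV_s=\bLambda+\sum_{u=t_{j-1}}^{s-1}\bx_u\bx_u\transpose$, which is reset to $\bLambda$ at the start of each phase. With this reading, $\overline{\bV}_j=\bV_{t_j}$ is the matrix reached at the end of phase $j-1$, and it is the one used to compute $\hat\balpha_j$ and to eliminate arms.

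The first ingredient is monotonicity in the Loewner order. For every $s$ in phase $j-1$ we have $\overline{\bV}_j=\bV_s+\sum_{u=s}^{t_j-1}\bx_u\bx_u\transpose\succeq \bV_s$. Hence $\overline{\bV}_j^{-1}\preceq \bV_s^{-1}$, and so for every vector $\bx$,
$$\|\bx\|_{\overline{\bV}_j^{-1}}\le \|\bx\|_{\bV_s^{-1}}.$$

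The second ingredient is the greedy selection rule. The elimination step only removes arms, so $A_j\subseteq A_{j-1}$. Any $\bx\in A_j$ was therefore available at every step $s$ of phase $j-1$. At each such step the algorithm plays $\bx_s=\arg\max_{\bx'\in A_{j-1}}\|\bx'\|_{\bV_s^{-1}}$, which gives $\|\bx\|_{\bV_s^{-1}}\le\|\bx_s\|_{\bV_s^{-1}}$. Combining this with the previous display, and using that $\min(1,\cdot)$ is nondecreasing, we get for each $s\in[t_{j-1},t_j-1]$:
$$\min\left(1,\|\bx\|_{\overline{\bV}_j^{-1}}\right)\le \min\left(1,\|\bx_s\|_{\bV_s^{-1}}\right).$$

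The left-hand side does not depend on $s$. Summing over the $t_j-t_{j-1}$ steps of phase $j-1$ and dividing by $t_j-t_{j-1}$ gives the claim. The only point requiring care, rather than any real obstacle, is this bookkeeping of indices. One must check that $\bV_s$ is the matrix that was current when $\bx_s$ was chosen, before $\bx_s\bx_s\transpose$ was added, and that $\overline{\bV}_j$ is the end-of-phase matrix used for elimination. The restriction $j>1$ is exactly what ensures that a preceding phase exists, and the inclusion $A_j\subseteq A_{j-1}$ is the fact that makes the greedy comparison valid.
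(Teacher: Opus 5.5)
Your proof is correct and matches the paper's argument: the paper also combines $\overline{\bV}_j^{-1}\preceq\bV_s^{-1}$ (its rank-one update lemma), the inclusion $A_j\subseteq A_{j-1}$, and the max-width selection rule during phase $j-1$. The only difference is packaging: the paper writes one chain that bounds by $\max_{\bx\in A_j}$ of the phase sum, enlarges this to $\max_{\bx\in A_{j-1}}$, and pushes the max inside the sum, while you bound pointwise in $s$ and then average. Your explicit reading of $\bV_s$ as the within-phase matrix reset to $\bLambda$ at $t_{j-1}$ is exactly what the paper's proof tacitly requires. With the cumulative $\bV_s$ from the confidence-ellipsoid section, neither the greedy comparison nor $\overline{\bV}_j\succeq\bV_s$ would hold.
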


\vspace{-.5em}

\begin{figure*}
 \begin{center}
\vskip -0.5em
\includegraphics[width=0.66\columnwidth]
{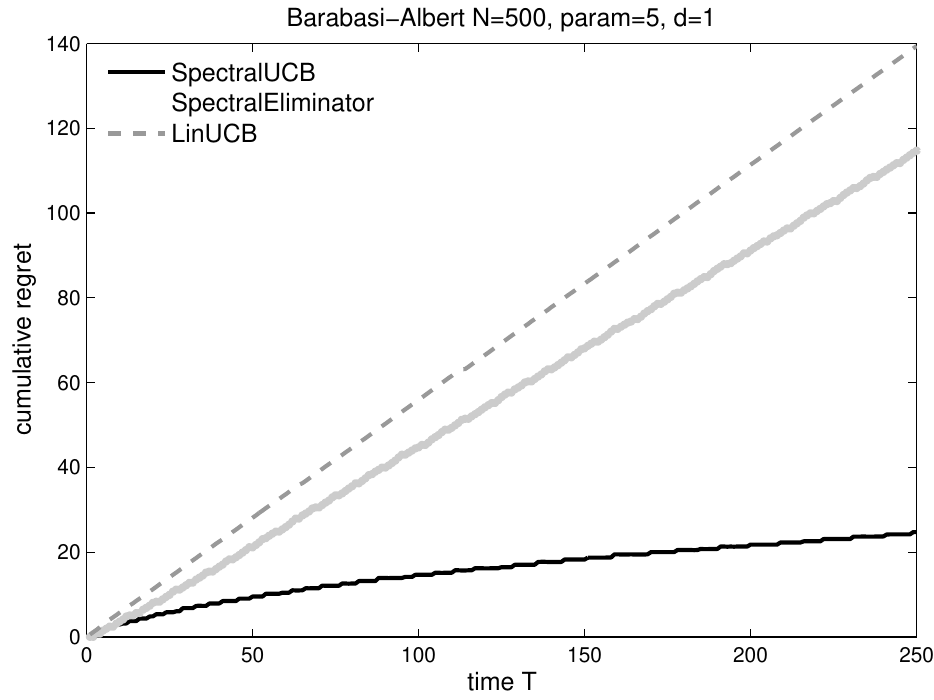}
\includegraphics[width=0.66\columnwidth]
{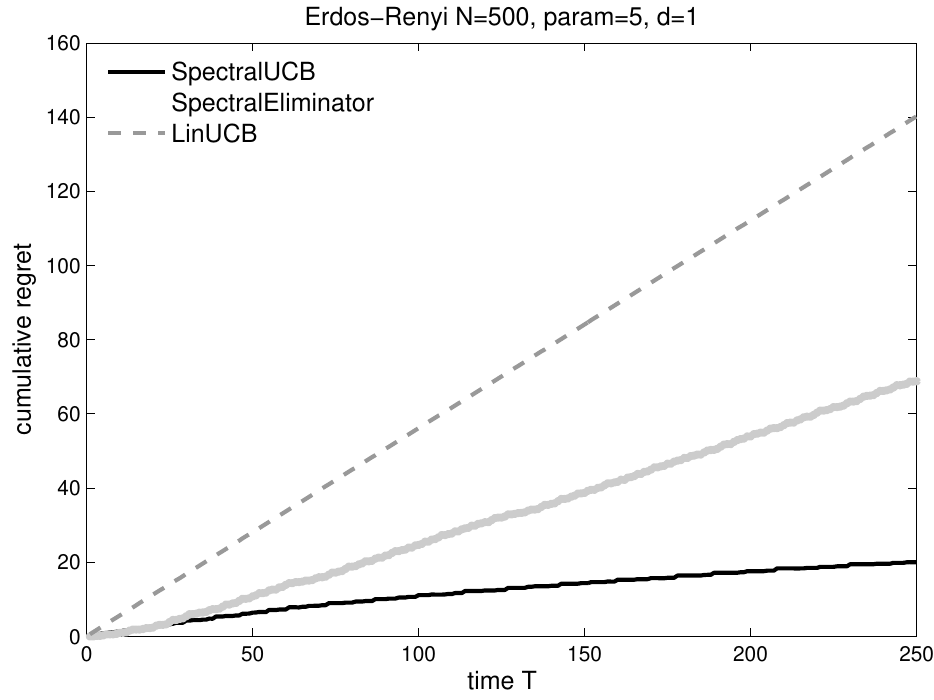}
\includegraphics[width=0.66\columnwidth]
{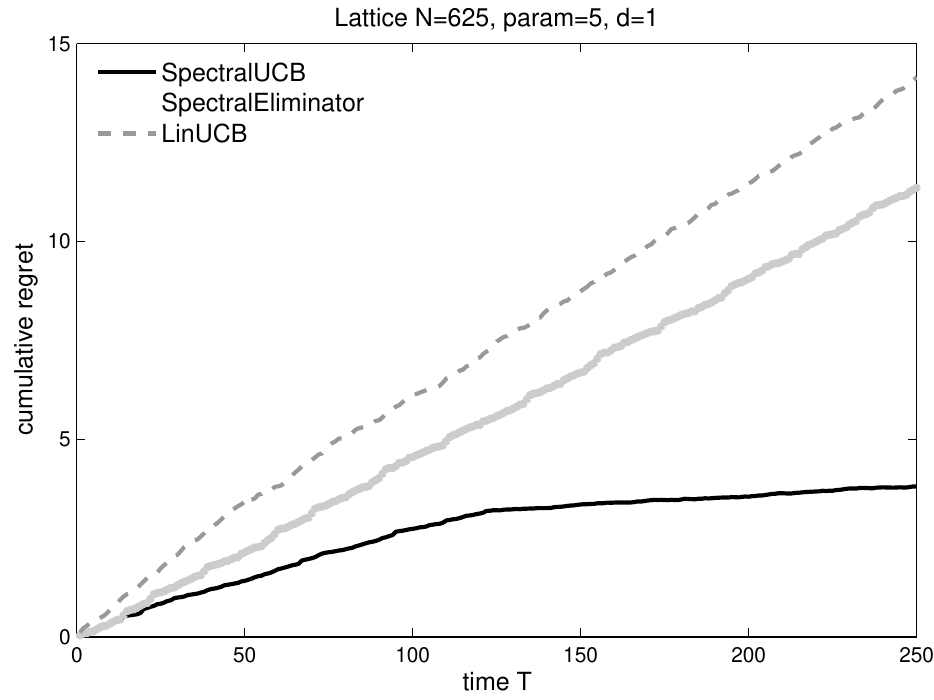}
\vskip -0.5em
\caption{Cumulative regret for random graphs models}
\label{fig:baall}
 \end{center}
 \vskip -1.5em
 \end{figure*}
\vspace{-0.4em}

Now we are ready to upperbound the cumulative regret
of \SpectralEliminator\/.
 \vspace{-1em}
\begin{proof}[Proof of Theorem~\ref{thm:eliminator}]
Let $J = \lfloor \log_2 T\rfloor + 1$ and $t_j = 2^{j-1}$. We have:
\begin{align*}
R_T &= \sum_{t=1}^T  \langle \bx_*-\bx_t, \balpha^* \rangle 		\\
&\leq 2 +  \sum_{j=2}^{J } \sum_{t=t_j}^{t_{j+1}-1} \min(2,\,\langle \bx_*-\bx_t, 
\balpha^* \rangle) \\
&\leq 2+\sum_{j=2}^{J } \sum_{t=t_j}^{t_{j+1}-1}\min\Big(2,\,\langle \bx_*-\bx_t, \hat\balpha_j \rangle \\
&\quad\qquad\qquad+\left(\|\bx_*\|_{\overline{\bV}_{j}^{-1}}+\|\bx_t\|_{\overline{\bV}_{j}^{-1}}\right)\beta(\delta')\Big),
\end{align*}
in an event $\Omega$ of probability $1-\delta$, where we used Lemma~\ref{lem:3} in the last inequality for $\delta' = \delta/(KJ)$. By definition of the action subset $A_j$ at phase $j>1$, under $\Omega$, we have:
$$\langle \bx_*-\bx_t, \hat\balpha_j \rangle \leq \left(\|\bx_*\|_{\overline{\bV}_j^{-1}}+\|\bx_t\|_{\overline{\bV}_j^{-1}}\right)\beta(\delta'),$$ since $\bx_*\in A_j$ for all $j\leq J$.
By previous two lemmas and Cauchy-Schwarz inequality:

\vspace{-1.8em}

\begin{align*}
&R_T\leq  2 + \sum_{j = 2}^J\sum_{t = t_j}^{t_{j+1}-1}\min\left(2,\, 4 \beta(\delta')\|\bx_t\|_{\overline{\bV}_{j}^{-1}}\right)		\\
&\leq 2 + (4\beta(\delta')+2)\sum_{j = 2}^J \sum_{t = t_j}^{t_{j+1}-1}\min\left(1,\,\|\bx_t\|_{\bV_t^{-1}}\right)		\\
&\leq 2 + (4\beta(\delta')+2)\sum_{j = 2}^J \frac{t_{j+1}-t_j}{t_j-t_{j-1}}\sum_{t = t_{j-1}}^{t_{j}-1}\min\left(1,\,\|\bx_t\|_{\bV_t^{-1}}\right)		\\
&\leq 2 + (8\beta(\delta')+4)\sum_{j = 2}^J \sum_{t = t_{j-1}}^{t_{j}-1}\min\left(1,\,\|\bx_t\|_{\bV_t^{-1}}\right)		\\
&\leq 2 + (8\beta(\delta')+4)\sqrt{T\sum_{j = 2}^J \sum_{t = t_{j-1}}^{t_{j}-1}\min\left(1,\,\|\bx_t\|_{\bV_t^{-1}}^2\right)}\\
&\leq 2 + (8\beta(\delta')+4)\sqrt{T\sum_{j = 2}^J 2\log\frac{|\overline{\bV_j}|}{|\bLambda|}}\\
&\leq 2 + (16\beta(\delta')+8)\sqrt{dT\log_2(T) \log\left(1 + \frac{T}{\lambda}\right)}
\end{align*}

Finally, using $J = 1 + \lfloor \log_2T\rfloor$, $\delta'=\delta/(KJ)$, and $\beta(\delta')\leq\beta(\delta/(K(1+\log_2T)))$, we obtain the result of Theorem~\ref{thm:eliminator}.
\end{proof}


\begin{remark}
If we set $\bLambda = \bI$ in~Algorithm~\ref{alg:TransductiveElimination} 
 as in Remark~\ref{rem:linucb}, we get a new algorithm,
\LinearEliminator\/,
which is a competitor to SupLinRel~\cite{auer2002using}
and as a corollary to Theorem~\ref{thm:eliminator}
also enjoys $\tilde\cO(\sqrt{DT})$ upper bound on the cumulative regret.
On the other hand, compared to SupLinRel, \LinearEliminator\/ and its analysis
are
significantly simpler and elegant.
\end{remark}

\section{Experiments}\label{sec:exp}

We evaluated our algorithms
and compared them to LinUCB.
In all experiments we set $\delta$ to 0.001 and $R$ to 0.01.
For \SpectralUCB\/ and \SpectralEliminator\/ we set $\bLambda$ to
$\bLambda_{\cL} + \lambda \bI$
with $\lambda = 0.01$. For LinUCB we
regularized with $\lambda \bI$ with $\lambda = 1$.
Our results are robust to small perturbations of all learning parameters.
We also performed experiments with
SupLinRel, SupLinUCB, SupSpectralUCB\footnote{an equivalent of SupLinUCB with spectral regularization},
but due to the known reasons \cite{chu2011contextual} these
algorithms are not efficient\footnote{at least for the sizes of $N$ and $T$ that
we deal with} and they were always outperformed by \SpectralUCB\/ and LinUCB.

\subsection{Random graph models}
To simulate realistic graph structures, we generated graphs of $N$ nodes
using three models that are commonly used in the social networks modeling.
First, we considered the widely known
\textit{Erd\H os-R\' enyi} (ER) model. We sampled the edges in the ER model
independently with probability 3\%. Second, we considered the
\textit{Barab\'asi-Albert} (BA)
model \citeyearpar{barabasi1999emergence}, with
the degree parameter 3.
BA models are commonly used for modeling real networks
due to their \textit{preferential attachment} property.
Finally, we considered graphs where the edge structure forms a
regular \textit{lattice}.

For all the graph models we assigned uniformly random weights to their edges.
Then, we randomly generated $k$-sparse vector $\balpha^*$ of $N$ weights,
$k \ll N$ and defined the true graph function as $f = \bQ\balpha^*$,
where $\bQ$ is the matrix
of eigenvectors from the eigendecomposition of the graph Laplacian.
We ran the algorithms in the desired $T < N$ regime,
with $N=500$ ($N=5^4$ for the lattice), $T = 250$, and $k=5$.
 Figure~\ref{fig:baall} 
shows that the regret of LinUCB for all three models has within first $T$ steps
still a linear trend unlike \SpectralUCB\/ that performs much better.

Unfortunately, even though the regret bound Spectral Eliminator
is asymptotically better, it was outperformed by \SpectralUCB\/.
This is similar to the linear case when LinUCB
outperforms\footnote{For example, these algorithms do not use all the rewards
obtained for the estimation of $\hat\balpha$.}
 SupLinUCB in practice
\cite{chu2011contextual} while it is an open problem
whether LinUCB can be shown to have a better regret.

\begin{figure*}
 \begin{center}
\includegraphics[viewport = 0 301 608 489,clip, width=1\columnwidth]
{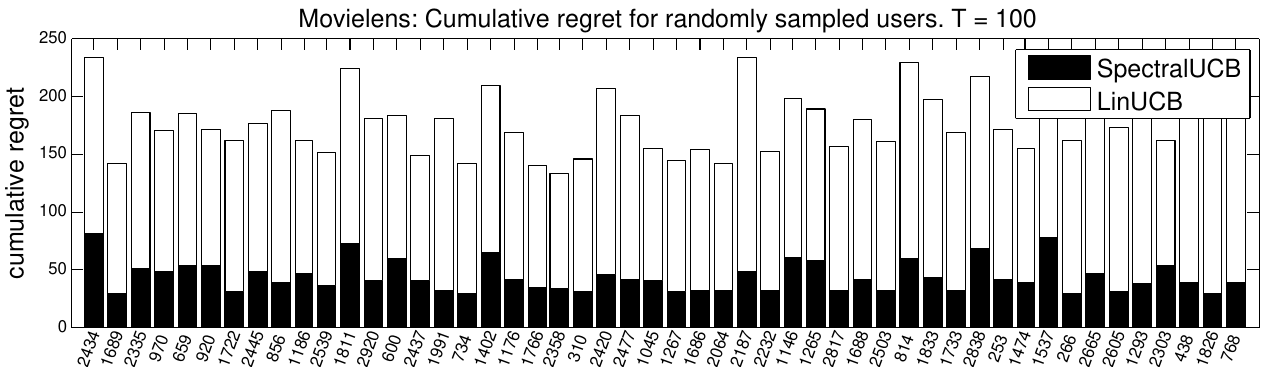}
\includegraphics[viewport = 0 301 608 489,clip,
width=1\columnwidth]{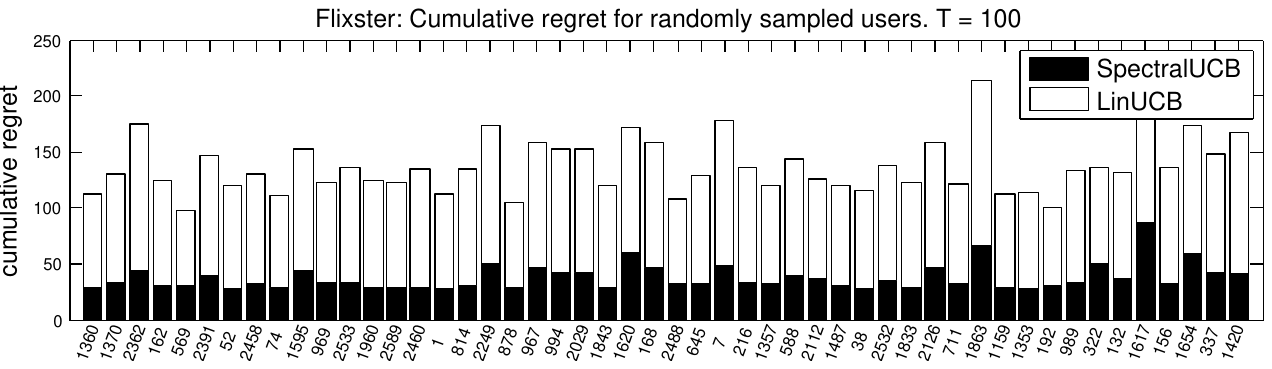}
\caption{MovieLens and Flixster: Cumulative regret for 50 randomly chosen
users.
Horizontal axis shows the user number.}
\label{fig:flixster_many}
 \end{center}
 \end{figure*}

\subsection{MovieLens experiments}
\label{ssec:movielens}

In this experiment we took user preferences and the similarity graph over
movies from the MovieLens dataset \cite{movielens}, a dataset of
6k users who rated one million movies. We divide the dataset into two
equally-sized parts. The first dataset is used to build our model of users, the
rating that user $i$ assigns to movie $j$. We factor the user-item matrix using
low-rank matrix factorization \cite{keshavan2009matrix} as $\bM \approx \bU
\bV'$,
a
standard
approach to collaborative filtering. The rating that the user $i$ assigns to
movie
$j$ is estimated as $\hat{r}_{i, j} = \langle\mathbf{u}_i, \mathbf{v}_j\rangle$,
where $\mathbf{u}_i$ is the $i$-th row of $\bU$ and $\mathbf{v}_j$ is the $j$-th
row of $\bV$. The rating $\hat{r}_{i, j}$ is the payoff of pulling arm $j$ when
recommending to user $i$. The second dataset is used to build our similarity
graph over movies. We factor the dataset in the same way as the first dataset.
The graph contains an edge between movies $i$ and $i'$ if the movie $i'$ is
among 10 nearest neighbors of the
movie $i$ in the latent space of items $\bV$. The weight on all edges is one.
Notice that if two items are close in the item space, then their expected
rating is expected to be similar. However, the opposite is not true. If two
items have a similar expected rating, they do not have to be close in the item
space. In other words, we take advantage of ratings  but do
not hardwire the two similarly rated items to be similar.

In Figure~\ref{fig:flixster_many}, we sampled 50 users and evaluated the regret
of both algorithms for $T=100$. Here \SpectralUCB\/ suffers only about one
fourth of regret over LinUCB, specifically 43.4611 vs.\,133.0996 on average.

\subsection{Flixster Experiments}
\label{sec:movielens}

We also performed experiments on users preferences from the movie
recommendation
website Flixster. The social network of the users was crawled by
\citet{jamali2010matrix} and then clustered by \citet{graclus} to obtain a
strongly connected subgraph. We extracted a subset of users and movies, where
each movie has at least 30 ratings and each user rated at least 30 movies. This
resulted in a dataset of 4546 movies and 5202 users.
As with MovieLens dataset we completed the missing
ratings by a low-rank matrix factorization and used it construct a 10-NN
similarity graph.

Again in Figure~\ref{fig:flixster_many}, we
sampled 50 users and evaluated the regret of both algorithms for $T=100$.
On average, \SpectralUCB\/ suffers only about one third
of regret over LinUCB, specifically 37.6499 vs.\,~99.8730 on average.

\subsection{Reduced basis}
\label{ssec:reduced}

As discussed in Section~\ref{ssec:scalability}, one
can decrease the computational complexity
and thus increase the scalability by only
extracting first $L\ll N$ eigenvectors
of the graph Laplacian. First, the computational
complexity of such operation is $\cO(L m \log m)$,
where $m$ is the number of edges. Second, the least-squares
problem that we have to do in each time step of the
algorithm is only $L$ dimensional.

In Figure~\ref{fig:reduced} we plot
the cumulative regret and the total
computational time in seconds (log scale) for a single user from the MovieLens
dataset. We varied $L$ as 20, 200, and 2000
 which corresponds to about $1\%$,
$10\%$ and $100\%$ of basis functions ($N=2019$).
The total computational time also includes
the computational savings from
lazy updates and iterative matrix inversion.
We see that with $10\%$ of the eigenvectors we can achieve
similar performance as for the full set for the fraction of the computational
time.

\begin{figure}
 \begin{center}
\includegraphics[width=0.49\columnwidth]
{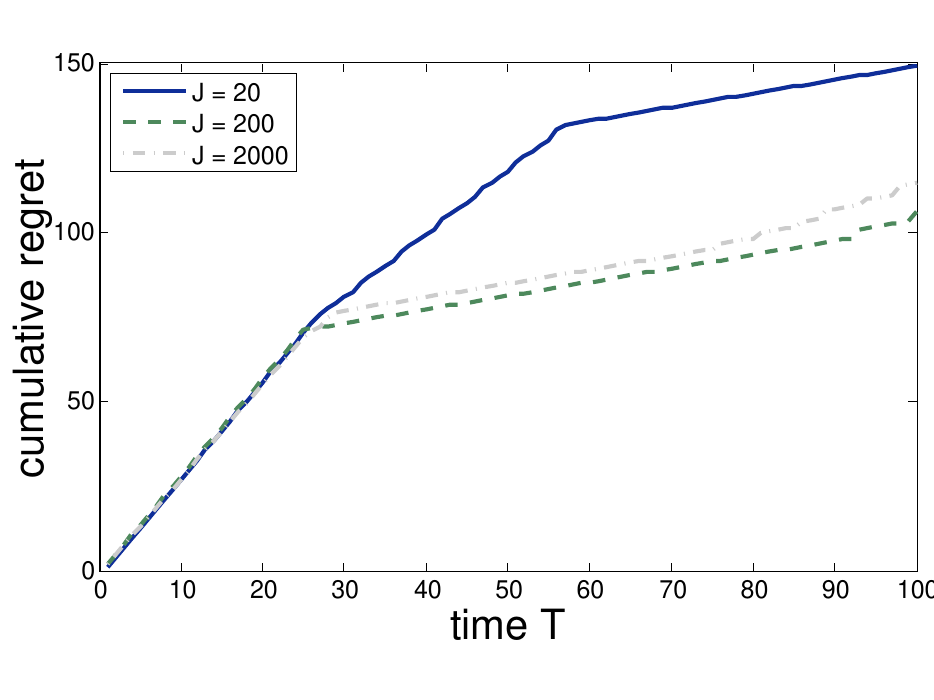}
\includegraphics[width=0.49\columnwidth]
{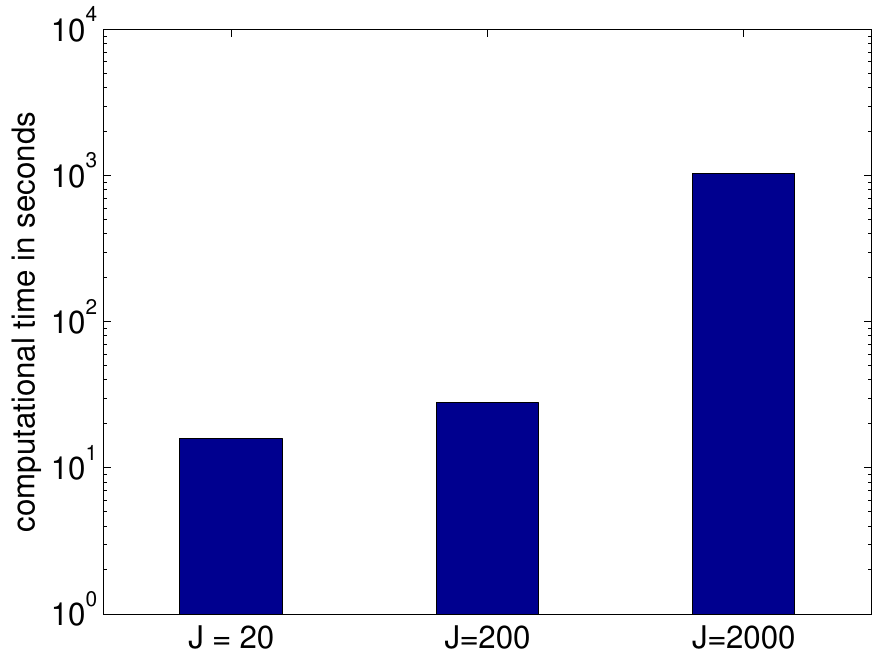}
\caption{Regret and computational time with reduced basis}
\label{fig:reduced}
 \end{center}
 \end{figure}

  \vspace{0.4em}
\section{Conclusion}
\label{sec:concl}
We presented spectral bandit setting
inspired mostly by the applications in
the recommender systems and targeted advertisement in social networks.
In this setting, we are asked to repeatedly
maximize an unknown graph function,
 assumed to be smooth
on a given similarity graph.
Traditional linear bandit
algorithm can be applied but their regret
scales with the ambient
dimension $D$, either linearly or as a square root,
which can be very large.

Therefore, we introduced
two algorithms, \SpectralUCB\/ and \SpectralEliminator\/, for which
the regret only scales with 
effective dimension $d$
which is typically much smaller than $D$
for real-world graphs.
We demonstrated that \SpectralUCB\/ delivers
desired benefit  for the graphs generated by
Barab\'asi--Albert, Erd\H os-R\' enyi,
and regular lattice models; and for the movie
rating data from the MovieLens and Flixster social networks.
In the future, we plan to extend this
work to a sparse setting when the smooth
function is assumed to be a linear combination
of only finite number of eigenvectors.

  \vspace{0.4em}
\section{Acknowledgements}
\label{sec:Acknowledgements}
We would like to thank Yiannis Koutis for his great help with the efficient
computation of eigenvectors. We thank Andreas Krause for suggesting
the lazy updates of UCBs. We would also
like to thank Giovanni Zappella, Claudio Gentile, and especially Alessandro
Lazaric for helpful discussions.
The research presented in this paper was supported by French Ministry of
Higher Education and Research, by European Community's
Seventh Framework Programme (FP7/2007-2013) under grant agreement n$^{\rm
o}$270327 (project CompLACS), and by Intel Corporation.

\urlstyle{same}
 \setlength{\bibsep}{4.95pt}
\clearpage
 \bibliography{../../../library}

\begin{thebibliography}{33}
\providecommand{\natexlab}[1]{#1}
\providecommand{\url}[1]{\texttt{#1}}
\expandafter\ifx\csname urlstyle\endcsname\relax
  \providecommand{\doi}[1]{doi: #1}\else
  \providecommand{\doi}{doi: \begingroup \urlstyle{rm}\Url}\fi

\bibitem[Abbasi-Yadkori et~al.(2011)Abbasi-Yadkori, P\'{a}l, and
  Szepesv\'{a}ri]{abbasi2011improved}
Abbasi-Yadkori, Y, P\'{a}l, D, and Szepesv\'{a}ri, C.
\newblock {Improved Algorithms for Linear Stochastic Bandits}.
\newblock In \emph{Neural Information Processing Systems (NeurIPS)}. 2011.

\bibitem[Abernethy et~al.(2008)Abernethy, Hazan, and
  Rakhlin]{abernethy2008competing}
Abernethy, J.~D, Hazan, E, and Rakhlin, A.
\newblock {Competing in the Dark: An Efficient Algorithm for Bandit Linear
  Optimization.}
\newblock In \emph{Conference on Learning Theory (COLT)}, 2008.

\bibitem[Alon et~al.(2013)Alon, Cesa-Bianchi, Gentile, and
  Mansour]{alon2013from}
Alon, N, Cesa-Bianchi, N, Gentile, C, and Mansour, Y.
\newblock {From Bandits to Experts: A Tale of Domination and Independence}.
\newblock In \emph{Neural Information Processing Systems (NeurIPS)}, 2013.

\bibitem[Auer(2002)]{auer2002using}
Auer, P.
\newblock {Using confidence bounds for exploitation-exploration trade-offs}.
\newblock \emph{Journal of Machine Learning Research}, 3:\penalty0 397--422,
  March 2002.
\newblock ISSN 1532-4435.

\bibitem[Auer \& Ortner(2010)Auer and Ortner]{auer2010ucb}
Auer, P and Ortner, R.
\newblock {UCB Revisited: Improved Regret Bounds for the Stochastic Multi-Armed
  Bandit Problem}.
\newblock \emph{Periodica Mathematica Hungarica}, 2010.

\bibitem[Azuma(1967)]{azuma1967weighted}
Azuma, K.
\newblock {Weighted sums of certain dependent random variables}.
\newblock \emph{Tohoku Mathematical Journal}, 19\penalty0 (3):\penalty0
  357--367, 1967.

\bibitem[Barab\'{a}si \& Albert(1999)Barab\'{a}si and
  Albert]{barabasi1999emergence}
Barab\'{a}si, A.-L and Albert, R.
\newblock {Emergence of scaling in random networks}.
\newblock \emph{Science}, 286:\penalty0 11, 1999.

\bibitem[Belkin et~al.(2004)Belkin, Matveeva, and
  Niyogi]{belkin2004regularization}
Belkin, M, Matveeva, I, and Niyogi, P.
\newblock {Regularization and Semi-Supervised Learning on Large Graphs}.
\newblock In \emph{Conference on Learning Theory (COLT)}, 2004.

\bibitem[Belkin et~al.(2006)Belkin, Niyogi, and Sindhwani]{belkin2006manifold}
Belkin, M, Niyogi, P, and Sindhwani, V.
\newblock {Manifold Regularization: A Geometric Framework for Learning from
  Labeled and Unlabeled Examples}.
\newblock \emph{Journal of Machine Learning Research}, 7:\penalty0 2399--2434,
  2006.

\bibitem[Billsus et~al.(2000)Billsus, Pazzani, and Chen]{billsus2000learning}
Billsus, D, Pazzani, M.~J, and Chen, J.
\newblock {A learning agent for wireless news access}.
\newblock In \emph{IUI}, pp.\  33--36, 2000.

\bibitem[Bubeck et~al.(2011)Bubeck, Munos, Stoltz, and Szepesvari]{bubeck2011x}
Bubeck, S, Munos, R, Stoltz, G, and Szepesvari, C.
\newblock {X-armed bandits}.
\newblock \emph{Journal of Machine Learning Research}, 12:\penalty0 1587--1627,
  2011.

\bibitem[Bubeck et~al.(2012)Bubeck, Cesa-Bianchi, and
  Kakade]{bubeck2012towards}
Bubeck, S, Cesa-Bianchi, N, and Kakade, S.
\newblock {Towards minimax policies for online linear optimization with bandit
  feedback}.
\newblock In \emph{Conference on Learning Theory (COLT)}, 2012.

\bibitem[Caron et~al.(2012)Caron, Kveton, Lelarge, and
  Bhagat]{caron2012leveraging}
Caron, S, Kveton, B, Lelarge, M, and Bhagat, S.
\newblock {Leveraging Side Observations in Stochastic Bandits.}
\newblock In \emph{Conference on Uncertainty in Artificial Intelligence (UAI)}, pp.\  142--151,
  2012.

\bibitem[Cesa-Bianchi et~al.(2013)Cesa-Bianchi, Gentile, and
  Zappella]{cesa-bianchi2013gang}
Cesa-Bianchi, N, Gentile, C, and Zappella, G.
\newblock {A Gang of Bandits}.
\newblock In \emph{Neural Information Processing Systems (NeurIPS)}, 2013.

\bibitem[Chau et~al.(2011)Chau, Kittur, Hong, and Faloutsos]{chau2011apolo}
Chau, D.~H, Kittur, A, Hong, J.~I, and Faloutsos, C.
\newblock {Apolo: making sense of large network data by combining rich user
  interaction and machine learning}.
\newblock In \emph{Conference on Human Factors in Computing Systems (CHI)}, 2011.

\bibitem[Chu et~al.(2011)Chu, Li, Reyzin, and Schapire]{chu2011contextual}
Chu, L, Li, L, Reyzin, L, and Schapire, R.
\newblock {Contextual Bandits with Linear Payoff Functions}.
\newblock In \emph{International Conference on Artificial Intelligence and Statistics (AISTATS)}, 2011.

\bibitem[Dani et~al.(2008)Dani, Hayes, and Kakade]{dani2008stochastic}
Dani, V, Hayes, T.~P, and Kakade, S.~M.
\newblock {Stochastic Linear Optimization under Bandit Feedback}.
\newblock In \emph{Conference on Learning Theory (COLT)}, 2008.

\bibitem[Desautels et~al.(2012)Desautels, Krause, and
  Burdick]{desautels12parallelizing}
Desautels, T, Krause, A, and Burdick, J.
\newblock {Parallelizing Exploration-Exploitation Tradeoffs with Gaussian
  Process Bandit Optimization}.
\newblock In \emph{International Conference on Machine Learning (ICML)}, 2012.

\bibitem[Graclus(2013)]{graclus}
Graclus.
\newblock {Graclus}, 2013.
\newblock URL \url{www.cs.utexas.edu/users/dml/Software/graclus.html}.

\bibitem[Jamali \& Ester(2010)Jamali and Ester]{jamali2010matrix}
Jamali, M and Ester, M.
\newblock {A matrix factorization technique with trust propagation for
  recommendation in social networks}.
\newblock In \emph{ACM Conference on Recommender Systems (RecSys)}. ACM, 2010.

\bibitem[Jannach et~al.(2010)Jannach, Zanker, Felfernig, and
  Friedrich]{jannach2010recommender}
Jannach, D, Zanker, M, Felfernig, A, and Friedrich, G.
\newblock \emph{{Recommender Systems: An Introduction}}.
\newblock Cambridge University Press, 2010.

\bibitem[Keshavan et~al.(2009)Keshavan, Oh, and Montanari]{keshavan2009matrix}
Keshavan, R, Oh, S, and Montanari, A.
\newblock {Matrix Completion from a Few Entries}.
\newblock In \emph{IEEE International Symposium on Information Theory}, pp.\
  324--328, 2009.

\bibitem[Kleinberg et~al.(2008)Kleinberg, Slivkins, and
  Upfal]{kleinberg2008multi}
Kleinberg, R, Slivkins, A, and Upfal, E.
\newblock {Multi-armed bandit problems in metric spaces}.
\newblock In \emph{ACM Symposium on Theory of Computing (STOC)}, 2008.

\bibitem[Koutis et~al.(2011)Koutis, Miller, and
  Tolliver]{koutis2011combinatorial}
Koutis, I, Miller, G.~L, and Tolliver, D.
\newblock {Combinatorial preconditioners and multilevel solvers for problems in
  computer vision and image processing}.
\newblock \emph{Computer Vision and Image Understanding}, 115:\penalty0
  1638--1646, 2011.

\bibitem[Lam \& Herlocker(2012)Lam and Herlocker]{movielens}
Lam, S and Herlocker, J.
\newblock {MovieLens 1M Dataset}.
\newblock http://www.grouplens.org/node/12, 2012.

\bibitem[Li et~al.(2010)Li, Chu, Langford, and Schapire]{li2010contextual}
Li, L, Chu, W, Langford, J, and Schapire, R.~E.
\newblock {A Contextual-Bandit Approach to Personalized News Article
  Recommendation}.
\newblock \emph{World Wide Web Conference (WWW)}, 2010.

\bibitem[McPherson et~al.(2001)McPherson, Smith-Lovin, and
  Cook]{mcpherson2001birds}
McPherson, M, Smith-Lovin, L, and Cook, J.
\newblock {Birds of a Feather: Homophily in Social Networks}.
\newblock \emph{Annual Review of Sociology}, 27:\penalty0 415--444, 2001.

\bibitem[Shamir(2011)]{variant2011shamir}
Shamir, O.
\newblock {A Variant of Azuma's Inequality for Martingales with Subgaussian
  Tails}.
\newblock \emph{CoRR}, abs/1110.2, 2011.

\bibitem[Slivkins(2009)]{slivkins2009contextual}
Slivkins, A.
\newblock {Contextual Bandits with Similarity Information}.
\newblock \emph{Conference on Learning Theory (COLT)},
  pp.\  1--27, 2009.

\bibitem[Srinivas et~al.(2010)Srinivas, Krause, Kakade, and
  Seeger]{srinivas2009gaussian}
Srinivas, N, Krause, A, Kakade, S, and Seeger, M.
\newblock {Gaussian Process Optimization in the Bandit Setting: No Regret and
  Experimental Design}.
\newblock \emph{International Conference on Machine Learning (ICML)},
  2010.

\bibitem[Valko et~al.(2013)Valko, Korda, Munos, Flaounas, and
  Cristianini]{valko2013finite}
Valko, M, Korda, N, Munos, R, Flaounas, I, and Cristianini, N.
\newblock {Finite-Time Analysis of Kernelised Contextual Bandits}.
\newblock In \emph{Conference on Uncertainty in Artificial Intelligence (UAI)}, 2013.

\bibitem[Zhang(2005)]{zhang2005schur}
Zhang, F.
\newblock \emph{{The Schur complement and its applications}}, volume~4.
\newblock Springer, 2005.

\bibitem[Zhu(2008)]{zhu2008semi-supervised}
Zhu, X.
\newblock {Semi-Supervised Learning Literature Survey}.
\newblock Technical Report 1530, U. of Wisconsin-Madison, 2008.

\end{thebibliography}
 \bibliographystyle{icml2014}

\clearpage

\nobalance
\appendix

\section*{Supplementary Material}
\textbf{Confidence ellipsoid}

\begin{lemma}\label{lem:updateInverse}
For any symmetric, positive semi-definite matrix $\bX$ and any vector \bu:

\vspace{-1.5em}

$$(\bX + \bu\bu\transpose)^{-1} \prec \bX^{-1}$$
\end{lemma}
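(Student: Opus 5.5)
The plan is to use the Sherman--Morrison formula to compute the difference of the two inverses in closed form, and then to check that this difference is positive semi-definite. Since the statement involves $\bX^{-1}$, I will read ``positive semi-definite'' as ``positive definite'', i.e.\ invertible. This is the only case in which the lemma is used: there $\bX = \bV_t$, which contains $\bLambda \succ 0$. Under this reading $\bX + \bu\bu\transpose \succeq \bX \succ 0$ is invertible as well.

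\textbf{Step 1.} Sherman--Morrison gives
\begin{align*}
(\bX + \bu\bu\transpose)^{-1} = \bX^{-1} - \frac{\bX^{-1}\bu\bu\transpose\bX^{-1}}{1+\bu\transpose\bX^{-1}\bu}.
\end{align*}
I will verify this directly by multiplying the right-hand side by $\bX+\bu\bu\transpose$. The denominator is at least $1$ because $\bX^{-1}\succ 0$.

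\textbf{Step 2.} Take any vector $\by$ and set $\bq = \bX^{-1}\bu$. Using the symmetry of $\bX^{-1}$,
\begin{align*}
\by\transpose\left(\bX^{-1} - (\bX + \bu\bu\transpose)^{-1}\right)\by = \frac{(\by\transpose \bq)^2}{1+\bu\transpose\bX^{-1}\bu} \geq 0.
\end{align*}
This establishes the ordering $(\bX + \bu\bu\transpose)^{-1} \preceq \bX^{-1}$.

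\textbf{Main obstacle: the strict sign.} The difference computed in Step 2 is a rank-one matrix. So for $N>1$ the inequality is strict only along directions $\by$ with $\by\transpose\bX^{-1}\bu\neq 0$, and it is not strict in the full Loewner sense. I will therefore state the conclusion as $\preceq$, which is what the later arguments need: $\|\bx\|_{\bV_{s+1}^{-1}}\le\|\bx\|_{\bV_s^{-1}}$, the monotonicity used in Lemma~\ref{lem:4}. I will also note that the inequality is strict whenever $\bu\neq 0$ and $\by$ is not $\bX^{-1}$-orthogonal to $\bu$.

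As an alternative to Sherman--Morrison, one can argue abstractly. From $\bX+\bu\bu\transpose \succeq \bX \succ 0$ one gets $\bX^{-1/2}(\bX+\bu\bu\transpose)\bX^{-1/2}\succeq \bI$. Inverting this relation gives the same ordering.
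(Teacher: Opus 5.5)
Your proposal is correct and is essentially the paper's proof: both apply Sherman--Morrison and observe that the rank-one correction contributes $-(\by\transpose\bX^{-1}\bu)^2/(1+\bu\transpose\bX^{-1}\bu)\le 0$ to the quadratic form $\by\transpose(\bX+\bu\bu\transpose)^{-1}\by$. Your two caveats are also accurate: the paper's argument likewise only establishes $\preceq$ (the strict $\prec$ fails for $N>1$ because the difference has rank one), and it tacitly needs $\bX\succ 0$ for $\bX^{-1}$ to exist.
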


\begin{proof}
For any vector $\by$, by Sherman--Morrison formula:
\begin{align*}
-\frac{\left(\bu\transpose\bX^{-1}\by\right)\transpose\left(\bu\transpose\bX^{-1}\by\right)}{1+\bu\transpose\bX^{-1}\bu}&\leq 0	\\
\by\transpose\left(\bX^{-1}-\frac{\bX^{-1}\bu\bu\transpose\bX^{-1}}{1+\bu\transpose\bX^{-1}\bu}\right)\by&\leq \by\transpose\bX^{-1}\by	\\
\by\transpose\left(\bX+\bu\bu\transpose\right)^{-1}\by&\leq \by\transpose\bX^{-1}\by
\end{align*}

\vspace{-2em}

\end{proof}

\setcounter{lemma}{3-1}
\begin{lemma}
 Let $\bV_t =  \bLambda + \sum_{s=1}^{t-1}\bx_s\bx_s\transpose$
and $\| \balpha^* \|_{\bLambda} \leq C$. With probability at least $1-\delta$, 
for any $\bx$  and $t\geq 1$:
\begin{align*}
|\bx\transpose (\hat \balpha_t-\balpha^*)|\leq
\|\bx  \|_{\bV_t^{-1}} \left(R
\sqrt{2\log \left(\frac{|\bV_t|^{1/2}}
  {\delta\textbf{}|\bLambda|^{1/2}}\right)} + C \right)
\end{align*}
\end{lemma}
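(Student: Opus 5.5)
The plan is to follow the proof of Theorem~2 in \citet{abbasi2011improved}, replacing the scalar regulariser $\lambda\bI$ by $\bLambda$ and then invoking Lemma~\ref{lem:selfnorm} for the noise term. I write the regularised least-squares estimate in closed form. Let $\bX_t$ be the matrix whose rows are $\bx_1,\dots,\bx_{t-1}$ and let $\br_t = \bX_t\balpha^* + \boldsymbol\varepsilon_t$. Then
\[
\hat\balpha_t = \bV_t^{-1}\bX_t\transpose \br_t = \bV_t^{-1}\bX_t\transpose\bX_t\balpha^* + \bV_t^{-1}\bxi_t ,
\]
where $\bxi_t = \bX_t\transpose\boldsymbol\varepsilon_t = \sum_{s<t}\varepsilon_s\bx_s$. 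Since $\bX_t\transpose\bX_t = \bV_t - \bLambda$, this becomes
\[
\hat\balpha_t - \balpha^* = \bV_t^{-1}\bxi_t - \bV_t^{-1}\bLambda\balpha^* .
\]
This split of the error into a noise term and a bias term coming from the regulariser is the central decomposition.

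Next, for any fixed $\bx$ I bound each term by Cauchy--Schwarz in the $\bV_t^{-1}$ geometry. The noise term satisfies
\[
|\bx\transpose\bV_t^{-1}\bxi_t| \le \|\bx\|_{\bV_t^{-1}}\|\bxi_t\|_{\bV_t^{-1}} .
\]
The bias term satisfies
\[
|\bx\transpose\bV_t^{-1}\bLambda\balpha^*| \le \|\bx\|_{\bV_t^{-1}}\,\|\bLambda\balpha^*\|_{\bV_t^{-1}} .
\]

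The bias term is the only place where the general $\bLambda$ matters, and I expect it to be the main (though mild) obstacle. The aim is to show $\|\bLambda\balpha^*\|_{\bV_t^{-1}} \le \|\balpha^*\|_{\bLambda}$. Since $\bV_t \succeq \bLambda$, we have $\bV_t^{-1}\preceq\bLambda^{-1}$. This follows by applying Lemma~\ref{lem:updateInverse} repeatedly, once for each rank-one update $\bx_s\bx_s\transpose$; it needs $\bLambda$ to be positive definite, which holds because $\lambda>0$. Therefore
\[
\|\bLambda\balpha^*\|^2_{\bV_t^{-1}} \le \balpha^{*\mathsf{T}}\bLambda\bLambda^{-1}\bLambda\balpha^* = \|\balpha^*\|_{\bLambda}^2 \le C^2 .
\]

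For the noise term I invoke Lemma~\ref{lem:selfnorm}. On an event of probability at least $1-\delta$, simultaneously for all $t\ge1$,
\[
\|\bxi_t\|_{\bV_t^{-1}} \le R\sqrt{2\log\big(|\bV_t|^{1/2}/(\delta|\bLambda|^{1/2})\big)} .
\]
Both bounds hold on this single event uniformly over $\bx$ and $t$, since the bias bound is deterministic and the noise bound does not depend on $\bx$. Adding the two bounds via the triangle inequality then gives the stated inequality.
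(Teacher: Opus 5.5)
Your proposal is correct and follows essentially the same route as the paper: the decomposition $\hat\balpha_t-\balpha^* = \bV_t^{-1}\bxi_t - \bV_t^{-1}\bLambda\balpha^*$, Cauchy--Schwarz in the $\bV_t^{-1}$ norm, Lemma~\ref{lem:selfnorm} for the noise term, and $\|\bLambda\balpha^*\|_{\bV_t^{-1}}\le\|\balpha^*\|_{\bLambda}$ for the bias term. You also spell out two points the paper leaves implicit: the justification $\bV_t^{-1}\preceq\bLambda^{-1}$, via repeated use of Lemma~\ref{lem:updateInverse} with $\bLambda$ positive definite, and the fact that the bound holds on a single event uniformly over $\bx$ and $t$.
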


\begin{proof}[Proof of Lemma~\ref{lem:confinterval}]
We have:
\begin{align*}
|\bx\transpose (\hat \balpha_t-\balpha^*)| &= |\bx\transpose (-\bV_t^{-1}
\bLambda \balpha^* + \bV_t^{-1} \bxi_{t} )| \\
  &\leq |\bx\transpose \bV_t^{-1} \bLambda \balpha^* | + | \bx\transpose
\bV_t^{-1} \bxi_{t} | \\
  &\leq \langle \bx \transpose, \bLambda \balpha^*\rangle_{\bV_t^{-1}}  +
\langle\bx, \bxi_{t}\rangle_{\bV_t^{-1}} \\
  &\leq \|\bx  \|_{\bV_t^{-1}} \left(\|\bxi_{t} \|_{\bV_t^{-1}} + \|\bLambda
\balpha^* \|_{\bV_t^{-1}} \right),
\end{align*}
where we used Cauchy-Schwarz inequality in the last step.
Now we bound $\|\bxi_{t} \|_{\bV_t^{-1}}$ by Lemma~\ref{lem:selfnorm} and notice
that:
\begin{align*}
\|\bLambda \balpha^* \|_{\bV_t^{-1}}
&= \sqrt{(\balpha^*)\transpose \bLambda \bV_t^{-1}  \bLambda \balpha^*} \\
& \leq \sqrt{(\balpha^*)\transpose  \bLambda \balpha^*}
= \| \balpha^* \|_{\bLambda} \leq C
 \end{align*}
\end{proof}



\textbf{Effective dimension}
\setcounter{lemma}{10-1}
\begin{lemma}\label{lemma:1}
For any real positive-definite matrix $A$ with only simple eigenvalue
multiplicities
and any vector $\bx$ such that $\|\bx \|_2\leq 1$ we have that the determinant
$|\bA+\bx \bx\transpose |$ is maximized by a vector $\bx$ which is aligned with
an eigenvector of $\bA$.
\end{lemma}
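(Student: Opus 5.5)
We maximize $\log|\bA+\bx\bx\transpose|$ over the unit ball using the matrix determinant lemma. Since $\bA$ is positive definite,
$$|\bA+\bx\bx\transpose| = |\bA|\left(1+\bx\transpose\bA^{-1}\bx\right),$$
so maximizing the determinant is the same as maximizing the quadratic form $g(\bx)=\bx\transpose\bA^{-1}\bx$ subject to $\|\bx\|_2\le 1$. This turns the problem into a Rayleigh-quotient question about $\bA^{-1}$.

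First, restrict to the unit sphere. The form $g$ is positive definite, and $g(s\bx)=s^2 g(\bx)$, so any maximizer must satisfy $\|\bx\|_2=1$. Second, write $\bA=\bU\bD\bU\transpose$ with eigenvalues $a_1<\dots<a_N$, which are distinct by the simple-multiplicity assumption, and set $\by=\bU\transpose\bx$. Then
$$g(\bx)=\sum_i \frac{y_i^2}{a_i}, \qquad \sum_i y_i^2=1.$$
This is a convex combination of the values $1/a_i$. It attains its maximum $1/a_1$ exactly when all the weight sits on $i=1$, since $1/a_1$ is strictly the largest of these values. Hence $\by=\pm\be_1$, i.e.\ $\bx=\pm\bu_1$, the eigenvector of $\bA$ for its smallest eigenvalue.

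Alternatively, one can argue through Lagrange multipliers. Stationarity on the sphere gives $\bA^{-1}\bx=\mu\bx$, so every critical point is an eigenvector of $\bA^{-1}$, and hence of $\bA$. Because the sphere is compact, a maximizer exists, and it must be one of these critical points.

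The main obstacle is conceptual rather than technical: being clear about what "aligned with an eigenvector" means. The simple-multiplicity assumption is what makes the eigenvectors well defined up to sign, so that the maximizer is unique up to sign. Without it, any unit vector in the eigenspace of $a_1$ would also be a maximizer. The argument above handles this case too, because some maximizer is still an eigenvector. Finally, applying this lemma greedily one rank-one update at a time is what should yield Lemma~\ref{lemma:2}. That step needs care, since the eigenbasis of the current matrix changes as updates are added, but it is not part of the present claim.
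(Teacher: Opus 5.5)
Your proof is correct and follows essentially the paper's argument. The paper also uses Sylvester's identity $|\bA+\bx\bx\transpose| = |\bA|(1+\bx\transpose\bA^{-1}\bx)$, diagonalizes $\bA=\bU\bD\bU\transpose$, and notes that $\by\mapsto\by\transpose\bD^{-1}\by$ on the unit ball is maximized at the canonical vector of the smallest eigenvalue; your homogeneity and convex-combination steps just spell out what the paper calls ``easy to see,'' and the Lagrange-multiplier variant is an optional aside rather than a different route.
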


\begin{proof}[Proof of Lemma~\ref{lemma:1}]
Using Sylvester's determinant theorem, we have:
$$|\bA+\bx \bx\transpose | = |\bA| |\bI+\bA^{-1}\bx\bx\transpose| = |\bA|
(1+\bx\transpose \bA^{-1} \bx)$$
From the spectral theorem, there exists an orthonormal matrix $\bU$, the columns
of which are the eigenvectors of $\bA$; such that $\bA=\bU \bD \bU\transpose$
with
$\bD$  being a diagonal matrix with the positive eigenvalues of $\bA$ on the
diagonal. Thus:
\begin{align*}
\max_{\|\bx\|_2\leq 1} \bx\transpose \bA^{-1} \bx
&= \max_{\|\bx\|_2\leq 1} \bx\transpose \bU \bD^{-1} \bU\transpose \bx\\
&= \max_{\|\by\|_2\leq 1} \by\transpose \bD^{-1} \by,
\end{align*}

since $\bU$ is a bijection from $\{\bx, \|\bx\|_2\leq 1\}$ to itself.

Since there are no multiplicities, it is easy to see that the quadratic mapping
$\by\mapsto \by\transpose \bD^{-1} \by$  is maximized
(under the constraint $\|\by \|_2\leq 1$) by a canonical vector
$\be_I$ corresponding to the lowest diagonal entry $I$ of $\bD$.
Thus the maximum of $\bx\mapsto \bx\transpose \bA^{-1} \bx$ is reached for $\bU
\be_I$, which is the eigenvector of $\bA$ corresponding to its lowest
eigenvalue.
\end{proof}

\setcounter{lemma}{4-1}

\begin{lemma}
Let $\bLambda=\mbox{diag}(\lambda_1,\dots,\lambda_N)$
be any diagonal matrix with strictly positive entries. Then for any vectors
$(\bx_s)_{1\leq s< t}$,
 such that $\|\bx_s \|_2\leq 1$ for all $1\leq s< t$, we have that the
 determinant $|\bV_t|$ of $\bV_t=\bLambda + \sum_{s=1}^{t-1} \bx_s \bx_s\transpose$
 is maximized when all $\bx_s$ are  aligned with the axes.
\end{lemma}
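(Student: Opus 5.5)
The plan is a variational argument. First take a global maximizer, then read off from first-order optimality that every $\bx_s$ is an eigenvector of $\bV_t$. From that, $\bV_t$ must commute with $\bLambda$, which will force the $\bx_s$ onto the axes.

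\textbf{Step 1 (existence and normalisation).} The set $\{(\bx_s)_{s<t}:\|\bx_s\|_2\le 1\}$ is compact and $(\bx_s)\mapsto|\bV_t|$ is continuous, so a maximizer exists. By Sylvester's identity, as in the proof of Lemma~\ref{lemma:1},
$$|\bV_t| = |\bA_s|\,(1+\bx_s\transpose\bA_s^{-1}\bx_s), \qquad \bA_s := \bLambda+\sum_{r\ne s}\bx_r\bx_r\transpose \succ 0 .$$
This is strictly increasing under scaling $\bx_s$ up, so at a maximizer every $\|\bx_s\|_2=1$.

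\textbf{Step 2 (each $\bx_s$ is an eigenvector of $\bV_t$).} Fix all vectors but $\bx_s$. Maximality means $\bx_s$ maximizes $\bx\transpose\bA_s^{-1}\bx$ on the unit sphere. Hence $\bx_s$ is an eigenvector of $\bA_s$ (for its smallest eigenvalue $\mu$); this is the content of Lemma~\ref{lemma:1}, but the Lagrange condition $\bA_s^{-1}\bx_s=\mu^{-1}\bx_s$ suffices even without simple spectrum. Then
$$\bV_t\bx_s = \bA_s\bx_s+\bx_s = (\mu+1)\bx_s ,$$
so every $\bx_s$ is an eigenvector of $\bV_t$. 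Consequently $\bV_t\bx_s\bx_s\transpose = \bx_s\bx_s\transpose\bV_t$ for each $s$. So $\bV_t$ commutes with $\sum_s\bx_s\bx_s\transpose$, and therefore with $\bLambda=\bV_t-\sum_s\bx_s\bx_s\transpose$.

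\textbf{Step 3 (conclude axis alignment).} First assume the $\lambda_i$ are distinct. Then commuting with $\bLambda$ forces $\bV_t$ to be diagonal, and hence so is $\sum_s\bx_s\bx_s\transpose$. Each eigenspace of the diagonal matrix $\bV_t$ is spanned by a set $E$ of coordinate axes, and each $\bx_s$ lies in one such span. Group the vectors by eigenspace. On a block $E$ the matrix $\bV_t$ equals $\nu\bI_E$, so $\bLambda_E+\sum_{s\in\text{block}}\bx_s\bx_s\transpose=\nu\bI_E$ and the block determinant is $\nu^{|E|}$. Replacing the block's vectors by axis vectors must not lower the determinant; I would try to show this by hand. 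Since $\sum_{s}\bx_s\bx_s\transpose=\nu\bI_E-\bLambda_E$ is diagonal with trace equal to the number of vectors in the block, I would try to redistribute the unit vectors as axis vectors with integer counts $n_i$. By AM--GM on $\prod_{i\in E}(\lambda_i+n_i)$ versus $\nu^{|E|}$, or by re-running Step 2 inside the block, the aim is a configuration at least as good that is exactly axis-aligned.

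Ties among the $\lambda_i$ I would handle by perturbing $\bLambda$ to distinct entries and passing to the limit. The maximum value is continuous in $\bLambda$, and there are only finitely many axis configurations (integer counts), so a configuration that is optimal along a sequence of perturbations remains optimal in the limit.

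I expect Step 3 to be the main obstacle, namely degenerate eigenspaces of $\bV_t$ at the optimum. The fallback there is the block-wise integer redistribution argument; the perturbation trick only removes ties in $\bLambda$, not in $\bV_t$.
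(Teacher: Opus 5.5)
\textbf{The statement is false, so Step~3 cannot be completed.} Your Steps~1 and~2 are correct. They reproduce the paper's own argument: Lemma~\ref{lemma:1} applied to one vector at a time, then $\bV_t$ commutes with $\sum_s\bx_s\bx_s\transpose$ and hence with $\bLambda$. The obstacle you flag in Step~3 is fatal. A diagonal $\bV_t$ with a repeated eigenvalue does not force the $\bx_s$ onto the axes, and optimality tends to produce exactly that degeneracy, because water-filling equalizes the $\lambda_i+t_i$.

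\textbf{Counterexample.} Take $N=2$, $\bLambda=\mathrm{diag}(1,2)$, $t-1=2$ and $\bx_{1,2}=(\sqrt3/2,\pm 1/2)\transpose$. Then
\[
\bx_1\bx_1\transpose+\bx_2\bx_2\transpose=\mathrm{diag}\bigl(\tfrac32,\tfrac12\bigr),\qquad \bV_3=\tfrac52\,\bI,\qquad |\bV_3|=\tfrac{25}{4},
\]
whereas axis-aligned choices give at most $\max\{3\cdot 2,\;2\cdot 3,\;1\cdot 4\}=6<\tfrac{25}{4}$. By Hadamard's inequality, $25/4$ is the global maximum. Equality there forces $\bV_3$ to be diagonal, and maximizing $(1+t_1)(2+t_2)$ on the simplex forces $\sum_s\bx_s\bx_s\transpose=\mathrm{diag}(\tfrac32,\tfrac12)$. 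Axis vectors cannot produce this matrix, so no maximizer is axis-aligned. The $\lambda_i$ are distinct here, so your perturbation step is irrelevant. This is precisely your block case with $E=\{1,2\}$ and $\nu=5/2$.

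\textbf{Your proposed repairs go the wrong way.} Your AM--GM comparison runs backwards. On a block where $\bV_t=\nu\bI_E$, any integer counts $n_i$ summing to the number of block vectors satisfy $\sum_{i\in E}(\lambda_i+n_i)=|E|\nu$. Hence $\prod_{i\in E}(\lambda_i+n_i)\le\nu^{|E|}$, with equality only when every $\nu-\lambda_i$ is an integer. So redistributing onto axes can only lose. Re-running Step~2 inside the block gives nothing either, since every vector there is already an eigenvector of $\bV_t$.

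\textbf{The paper has the same gap, and the fix is to drop axis alignment.} The paper's proof passes from ``$\bV_t$ is diagonal'' to ``all $\bx_s$ are aligned with the canonical axes'', which is the same invalid step. What is actually used downstream is Lemma~\ref{lemma:detgeo} with real $t_i$, and that is true without any alignment claim. By Hadamard's inequality, $|\bV_t|\le\prod_i(\lambda_i+t_i)$ with $t_i=\sum_s(\bx_s)_i^2\ge 0$ and $\sum_i t_i=\sum_s\|\bx_s\|_2^2\le t-1$. The correct move is to replace the lemma by this bound rather than try to prove axis alignment.
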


\begin{proof}[Proof of Lemma~\ref{lemma:2}]
Let us write $d(\bx_1,\dots,\bx_{t-1})=|\bV_t|$ the determinant of $\bV_t$. We want
to characterize:
$$\max_{\bx_1,\dots,\bx_{t-1}: \|\bx_s\|_2\leq 1, \forall 1\leq s< t}
d(\bx_1,\dots,\bx_{t-1})$$




For any $1\leq i< t$, let us define:
$$\bV_{-i} = \bLambda + \sum\limits_{{\begin{array}{c} \\[-1.5em]
\scriptstyle{s=1}\\[-0.5em] \scriptstyle{s\neq i}\end{array}}}^{t-1}
\bx_s\bx_s\transpose$$
We have that
$\bV_t = \bV_{-i} + \bx_i\bx_i\transpose$. Consider the case with only simple
eigenvalue
multiplicities.
In this case, Lemma~\ref{lemma:1} implies that $\bx_i \mapsto
d(\bx_1,\dots,\bx_i,\dots,\bx_{t-1})$ is
 maximized when $\bx_i$ is aligned with an eigenvector of $\bV_{-i}$.
Thus all $\bx_s$, for $1\leq s< t$, are aligned with an eigenvector
of $\bV_{-i}$ and therefore also with an eigenvector of $\bV_t$.
Consequently, the eigenvectors of $\sum_{s=1}^{t-1} \bx_s \bx_s\transpose$
are also aligned with $\bV_t$. Since $\bLambda = \bV_t - \sum_{s=1}^{t-1} \bx_s
\bx_s\transpose$
and $\bLambda$ is diagonal, we conclude that $\bV_t$ is diagonal and
all $\bx_s$ are  aligned with the canonical axes.


Now in the case of eigenvalue multiplicities, the maximum of $|\bV_t|$
may be reached by several sets of vectors $\{(\bx^m_s)_{1\leq s< t}\}_m$
but for some $m^*$, the set $(\bx^{m^*}_s)_{1\leq s< t}$
will be aligned with the axes.
In order to see that, consider a perturbed matrix $\bV_{-i}^\varepsilon$
by a random perturbation of amplitude at most $\varepsilon$,
i.e.~such that $\bV_{-i}^\varepsilon\rightarrow \bV_{-i}$ when
$\varepsilon\rightarrow 0$.
Since the perturbation is random, then the probability that
$\bLambda^\varepsilon$,
as well as all other $\bV_{-i}^\varepsilon$ possess an eigenvalue of
multiplicity bigger than 1 is zero.
Since the mapping $\varepsilon\mapsto \bV_{-i}^\varepsilon$  is continuous,
we deduce that any adherent point $\bar \bx_i$ of the sequence
$(\bx_i^\varepsilon)_{\varepsilon}$
(there exists at least one since the sequence is bounded in $\ell_2$-norm)
is aligned with the limit $\bV_{-i}$ and we can apply the previous reasoning.
\end{proof}

\setcounter{lemma}{5-1}
\begin{lemma}
For any $T$, let $\bV_{T+1}=\sum_{t=1}^{T} \bx_t \bx_t\transpose + \bLambda$. Then:
$$\log\frac{|\bV_{T+1}|}{| \bLambda  |} \leq \max \sum_{i=1}^N
\log\Big(1+\frac{t_i}{\lambda_i}\Big),$$
where the maximum is taken over all possible positive real
numbers $\{t_1,\dots,t_N\}$, such that $\sum_{i=1}^N t_i = T$.
\end{lemma}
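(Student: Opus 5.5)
We bound $|\bV_{T+1}|$ by its maximum over all admissible choices of $\bx_1,\dots,\bx_T$ with $\|\bx_t\|_2\le 1$. By Lemma~\ref{lemma:2}, this maximum is attained when every $\bx_t$ is aligned with a canonical axis, i.e., $\bx_t = \pm c_t \be_{i_t}$ for some index $i_t\in\{1,\dots,N\}$ and $0\le c_t\le 1$. Enlarging a vector only increases the determinant, since $\bA+\bx\bx\transpose \preceq \bA + \bx'\bx'^{\mathsf T}$ whenever $\bx'$ is a rescaling of $\bx$ with larger norm. So we may take $c_t=1$, although the argument below also works with general $c_t$.

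For such an axis-aligned configuration, $\bV_{T+1}$ is diagonal. Its $i$-th diagonal entry is $\lambda_i + t_i$, where $t_i=\sum_{t:\, i_t=i} c_t^2$. The $t_i$ are nonnegative and satisfy $\sum_i t_i \le T$, with equality in the unit-norm case. Hence
\[
\log\frac{|\bV_{T+1}|}{|\bLambda|} = \sum_{i=1}^N \log\frac{\lambda_i+t_i}{\lambda_i} = \sum_{i=1}^N\log\Big(1+\frac{t_i}{\lambda_i}\Big).
\]
Taking the maximum over all nonnegative reals $t_i$ summing to $T$ covers every axis-aligned configuration. If the sum is only at most $T$, we put the leftover mass on any coordinate, which can only increase the objective because it is monotone in each $t_i$. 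Relaxing from integer counts to real $t_i$ can only enlarge the maximum, and allowing $t_i\ge 0$ instead of strictly positive is harmless by continuity. This gives the stated bound for the maximizing configuration, and therefore for every configuration.

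The main obstacle is the appeal to Lemma~\ref{lemma:2}, in particular that its maximizer exists and is genuinely axis-aligned. Existence follows from compactness: the determinant is continuous on the product of unit balls. I would rely on the lemma as stated, including its perturbation argument for repeated eigenvalues.

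As a sanity check independent of Lemma~\ref{lemma:2}, one can use Hadamard's inequality. The matrix $\bV_{T+1}$ is positive definite, so $|\bV_{T+1}|\le\prod_i (\bV_{T+1})_{ii} = \prod_i\big(\lambda_i + \sum_t (\bx_t)_i^2\big)$. Setting $t_i=\sum_t(\bx_t)_i^2$ gives nonnegative reals with $\sum_i t_i=\sum_t\|\bx_t\|_2^2\le T$. This yields the claim directly, and I would present it as the cleaner route if the lemma's argument seemed fragile.
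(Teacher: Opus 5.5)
Your main argument is the paper's proof: you invoke Lemma~\ref{lemma:2} to reduce to axis-aligned $\bx_t$, note that $\bV_{T+1}$ is then $\mathrm{diag}(\lambda_i+t_i)$ with integer counts $t_i$ summing to $T$, and relax to real $t_i$. You are slightly more careful than the paper on two points. The paper writes ``positive integers'' where nonnegative counts are needed, and you also handle the rescaling $c_t\le 1$.

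Your Hadamard route is genuinely different and, for this lemma, better. Since $\bLambda$ is diagonal, $(\bV_{T+1})_{ii}=\lambda_i+\sum_t(\bx_t)_i^2$. The inequality $|\bV_{T+1}|\le\prod_i(\bV_{T+1})_{ii}$ then gives the bound directly, with $t_i=\sum_t(\bx_t)_i^2\ge 0$ and $\sum_i t_i\le T$. Monotonicity and continuity finish it exactly as you say.

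This route bypasses Lemma~\ref{lemma:2} entirely. With it go the reliance on Lemma~\ref{lemma:1} under a simple-spectrum assumption, the coordinatewise-optimality reasoning, and the informal random-perturbation step for repeated eigenvalues. These are the least rigorous parts of the paper's chain. The Hadamard route also needs no existence-of-maximizer or compactness argument.

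What the paper's route offers in exchange is structural information: the determinant is maximized by axis-aligned pulls, i.e.\ by integer allocations of the budget $T$ across eigen-directions. This supports the geometric intuition behind the effective dimension, but the inequality itself does not need it. I would lead with the Hadamard argument.
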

\begin{proof}[Proof of Lemma~\ref{lemma:detgeo}]
We want to bound the determinant $|\bV_{T+1}|$ under the coordinate
constraints $\|\bx_t\|_2\leq 1$. Let:

$$M(\bx_1,\dots,\bx_{T})= \Big| \bLambda + \sum_{t=1}^{T} \bx_t
\bx_t\transpose\Big|$$

From Lemma~\ref{lemma:2} we deduce that the maximum of $M$ is reached when all
$\bx_t$ are aligned with the axes:
\beqan
M&=&\max_{\bx_1,\dots,\bx_{T}; \bx_t\in \{\be_1,\dots, \be_N\}} \Big| \bLambda +
\sum_{t=1}^{T} \bx_t \bx_t\transpose\Big| \\
&=& \max_{t_1,\dots,t_N \mbox{ positive integers}, \sum_{i=1}^N t_i= T} \Big|
\mbox{diag}(\lambda_i + t_i )\Big| \\
&\leq& \max_{t_1,\dots,t_N \mbox{ positive reals}, \sum_{i=1}^N t_i= T}
\prod_{i=1}^N \Big(\lambda_i + t_i \Big),
\eeqan
from which we obtain the result.

%
%
\end{proof}

\setcounter{lemma}{6-1}

\begin{lemma}
Let $d$ be the effective dimension and $T$ be the time horizon of the algorithm. Then: 
$$\log\frac{|\bV_{T+1}|}{| \bLambda  |}\leq 2 d \log\left(1+\frac{T}{\lambda}\right)$$
\end{lemma}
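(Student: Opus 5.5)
We reduce the bound to the optimization problem of Lemma~\ref{lemma:detgeo}. That lemma gives
$$\log\frac{|\bV_{T+1}|}{|\bLambda|}\le \max\sum_{i=1}^N\log\Big(1+\frac{t_i}{\lambda_i}\Big),$$
where the maximum is over nonnegative reals $t_i$ with $\sum_i t_i=T$. It therefore suffices to show that, for every such allocation, the sum is at most $2d\log(1+T/\lambda)$.

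Fix a feasible allocation $(t_1,\dots,t_N)$ and split the indices into the first $d$ coordinates and the tail $i>d$. For each $i\le d$ we use $t_i\le T$ and $\lambda_i\ge\lambda$, so every term is at most $\log(1+T/\lambda)$. The head therefore contributes at most $d\log(1+T/\lambda)$.

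For the tail we use $\log(1+x)\le x$ and the monotonicity $\lambda_i\ge\lambda_{d+1}$ for $i>d$. This gives
$$\sum_{i>d}\log\Big(1+\frac{t_i}{\lambda_i}\Big)\le\sum_{i>d}\frac{t_i}{\lambda_{d+1}}\le\frac{T}{\lambda_{d+1}}.$$

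The crux is to bound $T/\lambda_{d+1}$ using the maximality in Definition~\ref{def:effectived}. Since $d$ is the largest integer satisfying the condition, $d+1$ violates it:
$$d\,\lambda_{d+1}>\frac{T}{\log(1+T/\lambda)},\qquad\text{i.e.}\qquad \frac{T}{\lambda_{d+1}}<d\log(1+T/\lambda).$$
So the tail also contributes at most $d\log(1+T/\lambda)$. Adding the two parts gives $2d\log(1+T/\lambda)$, uniformly over allocations, and hence for the maximum.

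Two edge cases need care. If $d=N$, there is no tail and the bound holds trivially, even with constant $1$ in place of $2$. If $d\ge1$, the definition is always satisfiable at $d=1$ because $(d-1)\lambda_d=0$, so $d$ is well defined.

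Beyond these, the only delicate point is making sure the violated inequality is applied with the correct index shift, i.e. $d\lambda_{d+1}$ rather than $(d-1)\lambda_d$. This shift is exactly what makes the choice of $(d-1)$ in the definition natural.
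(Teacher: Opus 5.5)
Your proof is correct and is essentially the paper's argument. You apply Lemma~\ref{lemma:detgeo}, bound each of the first $d$ terms by $\log(1+T/\lambda)$, bound the tail by $T/\lambda_{d+1}$ via $\log(1+x)\le x$, and use the maximality of $d$ in Definition~\ref{def:effectived} to get $T/\lambda_{d+1} < d\log(1+T/\lambda)$. You also handle the case $d=N$ explicitly, where $\lambda_{d+1}$ does not exist, which the paper's proof leaves implicit.
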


\begin{proof}[Proof of Lemma~\ref{lem:logdetratio}]
Using Lemma~\ref{lemma:detgeo} and Definition~\ref{def:effectived}:
\begin{align*}
\log\frac{|\bV_{T+1}|}{| \bLambda  |} &\leq \sum_{i=1}^d
\log\left(1+\frac{T}{\lambda}\right) + \sum_{i=d+1}^N
\log\left(1+\frac{t_i}{\lambda_{d+1}}\right) \\
&\leq d \log\left(1+\frac{T}{\lambda}\right) + \sum_{i=1}^N
\frac{t_i}{\lambda_{d+1}}\\
&\leq d \log\left(1+\frac{T}{\lambda}\right) + \frac{T}{\lambda_{d+1}} \leq 2 d \log\left(1+\frac{T}{\lambda}\right)
\end{align*}
\end{proof}

\newpage
\textbf{\SpectralEliminator\/}

\setcounter{lemma}{7-1}
\begin{lemma}
For any fixed $\bx\in\R^N$, any $\delta>0$, and
 $\beta(\delta)=2R \sqrt{14\log(2/\delta)} + \|\balpha^*\|_\bLambda  $, we have 
for any phase~$j$:
$$ \P\Big( |\bx\transpose (\hat\balpha_j-\balpha^*)| \leq \|\bx\|_{\overline{\bV}_j^{-1}}
\beta(\delta) \Big) \geq 1-\delta$$
\end{lemma}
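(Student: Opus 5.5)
Within phase $j$ the arms $\bx_{t_{j-1}},\dots,\bx_{t_j-1}$ are chosen by maximizing $\|\bx\|_{\overline{\bV}_t^{-1}}$ over $A_{j-1}$. This rule depends only on the arm set and the already chosen arms, not on the rewards of the current phase, and $A_{j-1}$ is $\cF_{j-2}$-measurable. So, conditionally on $\cF_{j-2}$, the design $\bX$ of the phase is deterministic and the noises $\varepsilon_t$ of the phase are independent, $R$-sub-Gaussian, and independent of $\bX$. I will therefore work conditionally on $\cF_{j-2}$, treat $\bX$ and $\overline{\bV}_j=\bLambda+\bX\transpose\bX$ as fixed, and integrate out at the end. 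Since the bound holds for every conditioning, it holds unconditionally.

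First I decompose the error as in the proof of Lemma~\ref{lem:confinterval}. Write $\hat\balpha_j=\overline{\bV}_j^{-1}\bX\transpose(\bX\balpha^*+\bxi)$ with $\bxi$ the noise vector of the phase. Then
\[
\bx\transpose(\hat\balpha_j-\balpha^*) = -\bx\transpose\overline{\bV}_j^{-1}\bLambda\balpha^* + \bx\transpose\overline{\bV}_j^{-1}\bX\transpose\bxi .
\]
The bias term is handled exactly as before. Cauchy--Schwarz in the $\overline{\bV}_j^{-1}$ inner product, together with $\bLambda\overline{\bV}_j^{-1}\bLambda\preceq\bLambda$ (since $\overline{\bV}_j\succeq\bLambda$), gives $|\bx\transpose\overline{\bV}_j^{-1}\bLambda\balpha^*|\le\|\bx\|_{\overline{\bV}_j^{-1}}\|\balpha^*\|_\bLambda$. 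This produces the additive $\|\balpha^*\|_\bLambda$ in $\beta(\delta)$.

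For the noise term, the key gain over Lemma~\ref{lem:selfnorm} is that $\bx$ is fixed. Define the deterministic weights $\bu=\bX\overline{\bV}_j^{-1}\bx$. The noise term is then $\sum_t u_t\varepsilon_t$, a weighted sum of independent $R$-sub-Gaussian variables (a martingale, if one prefers to use Azuma/Shamir's sub-Gaussian variant). The variance proxy satisfies
\[
\|\bu\|_2^2=\bx\transpose\overline{\bV}_j^{-1}\bX\transpose\bX\overline{\bV}_j^{-1}\bx\le\bx\transpose\overline{\bV}_j^{-1}\bx=\|\bx\|^2_{\overline{\bV}_j^{-1}},
\]
because $\bX\transpose\bX\preceq\overline{\bV}_j$. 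A two-sided sub-Gaussian tail bound then gives, with probability at least $1-\delta$, $|\sum_t u_t\varepsilon_t|\le c\,R\|\bx\|_{\overline{\bV}_j^{-1}}\sqrt{\log(2/\delta)}$. The loose constant $2\sqrt{14}$ in the statement matches the constant in Shamir's Azuma variant for sub-Gaussian martingale differences (cited in the bibliography). I would invoke that inequality to recover the exact constant; a plain Hoeffding bound would give an even smaller one, which is also fine.

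The main obstacle is justifying the independence step rigorously. I must check that the within-phase selection (largest width, with $\overline{\bV}$ reset at the phase start) never looks at current-phase rewards. I must also check that $\hat\balpha_j$ uses only that phase's rewards, so that conditioning on the past makes the design fixed and the vector $\bu$ predictable or deterministic. Once that is settled, adding the bias and noise bounds yields the claim.
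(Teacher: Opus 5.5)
Your proposal is correct and follows the paper's proof essentially step for step. It uses the same bias/noise decomposition and the same bound $\|\balpha^*\|_\bLambda\|\bx\|_{\overline{\bV}_j^{-1}}$ on the bias term via $\overline{\bV}_j\succeq\bLambda$. It relies on the same observation that, given $\cF_{j-2}$, the width-maximizing design of the phase is deterministic. The noise term is therefore a weighted sum of sub-Gaussian terms with variance proxy at most $R^2\|\bx\|^2_{\overline{\bV}_j^{-1}}$, using $\sum_s\bx_s\bx_s\transpose\preceq\overline{\bV}_j$.

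Your remark that a direct Chernoff bound with the $\cF_{j-2}$-measurable weights already gives the smaller constant $R\sqrt{2\log(2/\delta)}$ in place of $2R\sqrt{14\log(2/\delta)}$ is also right. If anything, it is cleaner than the paper's appeal to Shamir's Azuma variant, which is stated for unweighted sums.
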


\begin{proof}[Proof of Lemma~\ref{lem:3}]
Defining $\bxi_j=\sum_{s=t_{j-1}}^{t_{j}-1}\bx_s \varepsilon_s$, we have:
\begin{align}
\label{eq:1}
\left|\bx\transpose (\hat \balpha_j-\balpha^*)\right| &= |\bx\transpose (-\overline{\bV}_j^{-1}
\bLambda \balpha^* + \overline{\bV}_j^{-1} \bxi_j )| \nonumber\\
&\leq |\bx\transpose \overline{\bV}_j^{-1} \bLambda \balpha^* | + | \bx\transpose
\overline{\bV}_j^{-1} \bxi_j |
\end{align}
The first term in the right hand side~of \eqref{eq:1} is bounded as
\beqan
|\bx\transpose \overline{\bV}_j^{-1} \bLambda \balpha^*|
&\leq& \|\bx\transpose \overline{\bV}_j^{-1} \bLambda^{1/2}\|  \|\bLambda^{1/2}\balpha^*\| \\
&=& \|\balpha^*\|_\bLambda  \sqrt{\bx\transpose \overline{\bV}_j^{-1} \bLambda \overline{\bV}_j^{-1}
\bx} \\
&\leq & \|\balpha^*\|_\bLambda   \sqrt{\bx\transpose \overline{\bV}_j^{-1} \bx} =
\|\balpha^*\|_\bLambda \|\bx\|_{\overline{\bV}_j^{-1}},
\eeqan
where we used Lemma~\ref{lem:updateInverse} in the second inequality.

Now consider the second term in the r.h.s.~of \eqref{eq:1}. We have:
$$ \left|\bx\transpose \overline{\bV}_j^{-1} \bxi_j\right| = \left|
\sum_{s=t_{j-1}}^{t_{j}-1} (\bx\transpose \overline{\bV}_j^{-1}
\bx_s)\varepsilon_s\right|$$

Let us notice that the points $(\bx_s)_{t_{j-1}\leq s< t_{j}}$selected by the algorithm during phase
$j-1$ only depend on their width $\|\bx\|_{\overline{\bV}_{s}^{-1}}$ which does not depend
on the rewards received during the phase $j-1$. Thus, given $\cF_{j-2}$, the
sequence
$(\bx_s)_{t_{j-1}\leq s< t_{j}}$ is deterministic. Consequently, we can use a
variant
of Azuma's inequality \cite{variant2011shamir}:
\begin{align*}
\P\Bigg(& |\bx\transpose \overline{\bV}_j^{-1} \bxi_j|^2 \leq 28 R^2 2 \log(2/\delta)
\times
\\ & \times  \bx\transpose \overline{\bV}_j^{-1}
\Big(\sum_{s=t_{j-1}}^{t_{j}-1} \bx_s \bx_s\transpose \Big)
\overline{\bV}_j^{-1} \bx \Big|
\cF_{j-2}\Bigg) \geq 1-\delta,
\end{align*}
from which we deduce
$$
\P\Big( |\bx\transpose \overline{\bV}_j^{-1} \bxi_j|^2 \leq 56 R^2 \bx\transpose \overline{\bV}_j^{-1}
\bx  \log(2/\delta) \Big| \cF_{j-2}\Big) \geq 1-\delta,$$
since $\sum_{s=t_{j-1}}^{t_{j}-1} \bx_s \bx_s\transpose \prec \overline{\bV}_j$
by Lemma~\ref{lem:updateInverse}.
Thus:
$$
\P\Big( |\bx\transpose \overline{\bV}_j^{-1} \bxi_j| \leq 2 R \|\bx\|_{\overline{\bV}_j^{-1}}\sqrt{ 14
\log(2/\delta)}\Big) \geq 1-\delta$$
We obtain the statement of the lemma by combining the bounds of the two terms in~\eqref{eq:1} and
setting $\beta(\delta)$  as:
$$\beta(\delta)=2R \sqrt{14\log(2/\delta)} + \|\balpha^*\|_\bLambda$$t
\end{proof}

\setcounter{lemma}{8-1}
\begin{lemma}For all $\bx\in A_j$, $j>1$, we have:
$$\min\left(1,\,\|\bx\|_{\overline{\bV}_{j}^{-1}}\right) \leq \frac{1}{t_{j}-t_{j-1}} \sum_{s=t_{j-1}}^{t_{j}-1}
\min\left(1,\,\|\bx_s\|_{\bV_{s}^{-1}}\right)$$
\end{lemma}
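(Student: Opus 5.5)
The argument rests on the max-width selection rule and the monotonicity of $\bV^{-1}$ under rank-one updates (Lemma~\ref{lem:updateInverse}). We read $\bV_s$ on the right-hand side as the phase-local matrix $\overline{\bV}_s$ built during phase $j-1$: it starts from $\bLambda$ at $t_{j-1}$ and is augmented by $\bx_s\bx_s\transpose$ at each step up to $t_j-1$. Its final value is exactly $\overline{\bV}_j$.

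Fix $\bx\in A_j$ with $j>1$. Because the active sets are nested, $A_j\subseteq A_{j-1}$, so $\bx$ was a candidate at every step $s$ of phase $j-1$. The algorithm plays $\bx_s=\arg\max_{\bx'\in A_{j-1}}\|\bx'\|_{\overline{\bV}_s^{-1}}$, which gives, for each $s\in[t_{j-1},t_j-1]$,
$$\|\bx\|_{\overline{\bV}_s^{-1}}\leq \|\bx_s\|_{\overline{\bV}_s^{-1}}.$$
Next, $\overline{\bV}_j=\overline{\bV}_s+\sum_{u=s}^{t_j-1}\bx_u\bx_u\transpose$. Applying Lemma~\ref{lem:updateInverse} repeatedly yields $\overline{\bV}_j^{-1}\preceq \overline{\bV}_s^{-1}$, and therefore
$$\|\bx\|_{\overline{\bV}_j^{-1}}\leq \|\bx\|_{\overline{\bV}_s^{-1}}\leq \|\bx_s\|_{\overline{\bV}_s^{-1}}.$$
The map $u\mapsto\min(1,u)$ is monotone, so $\min(1,\|\bx\|_{\overline{\bV}_j^{-1}})\leq \min(1,\|\bx_s\|_{\overline{\bV}_s^{-1}})$ holds for every $s$ in the phase.

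The left-hand side does not depend on $s$. Averaging this inequality over the $t_j-t_{j-1}$ indices of phase $j-1$ gives the claimed bound directly.

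The only delicate point is bookkeeping rather than analysis. One must check that the matrix in the statement is the one the algorithm actually uses when it selects $\bx_s$, namely the phase-local $\overline{\bV}_s$ that is reset to $\bLambda$ at the start of each phase. One must also check that $\overline{\bV}_j$ is its final value after the last update of phase $j-1$, so that the Loewner ordering goes in the right direction. If the right-hand side were meant with the global $\bV_s$, which accumulates all past pulls, the argmax comparison would no longer apply. In that case the statement should be read with the phase-local matrices, which is also what the later use of Lemma~\ref{lemma:sumx2} per phase, with $\log(|\overline{\bV}_j|/|\bLambda|)$, requires.
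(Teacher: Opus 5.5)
Your proof is correct and follows essentially the same route as the paper, using Loewner monotonicity from Lemma~\ref{lem:updateInverse}, the nesting $A_j\subseteq A_{j-1}$, and the max-width selection rule during phase $j-1$; the paper packages your pointwise comparison as a bound by $\max_{\bx\in A_{j-1}}$ of the sum and then moves the max inside the sum. Your reading of $\bV_s$ as the phase-local matrix, reset to $\bLambda$ at $t_{j-1}$, is also the reading the paper's appeal to the algorithm's width-maximizing choice implicitly requires.
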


\begin{proof}[Proof of Lemma~\ref{lem:4}]
Using Lemma \ref{lem:updateInverse}, we have:
\begin{align*}
(t_{j}-&t_{j-1}) \min\left(1,\,\|\bx\|_{\overline{\bV}_{j}^{-1}}\right) \\
& \leq  \max_{\bx\in A_j} \sum_{s=t_{j-1}}^{t_{j}-1} \min\left(1,\,\|\bx\|_{\bV_{s}^{-1}}\right)\\
& \leq  \max_{\bx\in A_{j-1}} \sum_{s=t_{j-1}}^{t_{j}-1} \min\left(1,\,\|\bx\|_{\bV_{s}^{-1}}\right)\\
&\leq \sum_{s=t_{j-1}}^{t_{j}-1} \min\left(1,\,\max_{\bx\in A_{j-1}}  \|\bx\|_{\bV_{s}^{-1}}\right)\\
&= \sum_{s=t_{j-1}}^{t_j-1}  \min\left(1,\,\|\bx_s\|_{\bV_{s}^{-1}}\right),
\end{align*}
since the algorithm selects (during phase $j-1$) the arms with largest width.
\end{proof}

\end{document}